\newcommand\etc{etc\@ifnextchar.{}{.\@}}
\newcommand\ie{i.e\@ifnextchar.{}{.\@}}
\newcommand\iid{i.i.d\@ifnextchar.{}{.\@}}
\newcommand\wrt{w.r.t\@ifnextchar.{}{.\@}}
\theoremstyle{definition}  
\theoremstyle{plain}
\newtheorem{theorem}{Theorem}
\newtheorem{lemma}{Lemma}
\theoremstyle{remark}
\newtheorem*{remark}{Remark}
\newcommand{\pref}[1]{\prettyref{#1}}
\newcommand{\savehyperref}[2]{\texorpdfstring{\hyperref[#1]{#2}}{#2}}
\DeclarePairedDelimiter{\abs}{\lvert}{\rvert} %
\DeclarePairedDelimiter{\brk}{[}{]}
\DeclarePairedDelimiter{\crl}{\{}{\}}
\DeclarePairedDelimiter{\prn}{(}{)}
\DeclarePairedDelimiter{\nrm}{\|}{\|}
\let\Pr\undefined
\DeclareMathOperator{\Pr}{\mathrm{Pr}}
\DeclareMathOperator*{\argmin}{arg\,min} 
\DeclareMathOperator*{\argmax}{arg\,max}
\newcommand{\ind}{\mathbb{I}}    
\def\ddefloop#1{\ifx\ddefloop#1\else\ddef{#1}\expandafter\ddefloop\fi}
\def\ddef#1{\expandafter\def\csname #1bb\endcsname{\ensuremath{\mathbb{#1}}}}
\def\ddefloop#1{\ifx\ddefloop#1\else\ddef{#1}\expandafter\ddefloop\fi}
\def\ddef#1{\expandafter\def\csname #1b\endcsname{\ensuremath{\mathbf{#1}}}}
\def\ddef#1{\expandafter\def\csname #1c\endcsname{\ensuremath{\mathcal{#1}}}}
\def\ddef#1{\expandafter\def\csname #1hat\endcsname{\ensuremath{\widehat{#1}}}}
\def\ddef#1{\expandafter\def\csname hc#1\endcsname{\ensuremath{\widehat{\mathcal{#1}}}}}
\def\ddef#1{\expandafter\def\csname #1til\endcsname{\ensuremath{\widetilde{#1}}}}
\def\ddef#1{\expandafter\def\csname tc#1\endcsname{\ensuremath{\widetilde{\mathcal{#1}}}}}
\renewcommand{\epsilon}{\varepsilon}
\DeclareMathOperator{\ex}{\mathbb{E}}
\newcommand{\inner}[1]{\langle #1\rangle}
\newcommand{\indic}[1]{\ind_{\crl{#1}}}
\newcommand{\mpick}{\textsc{Model Picker}}
\newcommand{\emo}{\textsc{EmoContext}}
\newcommand{\imagenet}{\textsc{ImageNet}}
\newcommand{\drift}{\textsc{Drift}}
\newcommand{\cifar}{\textsc{CIFAR-10}}
\newcommand{\entropy}{\textsc{Entropy}}
\newcommand{\importance}{\textsc{Importance}}
\newcommand{\efficient}{\textsc{Efficient}}
\newcommand{\efal}{\textsc{Efal}}
\newcommand{\sqbc}{\textsc{S-QBC}}
\newcommand{\qbc}{\textsc{QBC}}
\newcommand{\LEP}{\textsc{LEP}}
\newcommand{\cifarw}{\textsc{CIFAR-10 V2}}
\newcommand{\loss}{\bm{\ell}}
\newcommand{\ellest}{\hat{\bm{\ell}}}
\newcommand{\Lest}{\widehat{\bm{L}}}
\newcommand{\Loss}{\bm{L}}
\newcommand{\dist}{\bm{w}}
\newcommand{\sample}{x}
\newcommand{\pred}{\bm{p}}
\newcommand{\query}{q}
\newcommand{\queryind}{Q}
\newcommand{\querybef}{v}
\begin{document}

%
\runningtitle{Online Active Model Selection}

%
\runningauthor{M. Karimi, N. G\"urel, B. Karla\v{s}, J. Rausch, C. Zhang, A. Krause}
\twocolumn[

\aistatstitle{Online Active Model Selection for Pre-trained Classifiers}

\aistatsauthor{ Mohammad Reza Karimi$^\star$ \And Nezihe Merve G\"urel$^\star$ \And Bojan Karla\v{s} }
\aistatsauthor{Johannes Rausch \And Ce Zhang \And Andreas Krause }

\aistatsaddress{ ETH Z\"urich } ]

\begin{abstract}
Given $k$ pre-trained classifiers and a stream of unlabeled data examples, how can we actively decide when to query a label so that we can distinguish the best model from the rest while making a small number of queries?
Answering this question has a profound impact on a range of practical scenarios.
In this work, we design an online selective sampling approach that actively selects informative examples to label and outputs the best model with high probability at any round. Our algorithm can be used for online prediction tasks for both adversarial and stochastic streams.
We establish several theoretical guarantees for our algorithm and extensively demonstrate its effectiveness in our experimental studies.
\end{abstract}


\section{INTRODUCTION}
Model selection from a set of pre-trained models is an emerging problem in
machine learning and has implications in several practical scenarios.
Industrial examples include cases in which a telecommunication company or a
flight booking company has multiple ML models trained over different sliding
windows of data and hopes to pick the one that performs the best on a given day.
For many real-world problems, unlabeled data is abundant and can be
inexpensively collected, while labels are expensive to acquire and require human
expertise.  Consequently, there is a need to robustly identify the best model
under limited labeling resources. Similarly, one often needs reasonable
predictions for the unlabeled data while keeping the labeling budget low.

Depending on the data availability, one can consider two settings: (i) the
\emph{pool-based} setting assumes that the learner has access to a pool of
unlabeled data, and she can select informative data samples from the pool to
achieve her task, and (ii) the \emph{online} setting assumes the data is
arriving one example at a time (\ie, in a stream), and the learner decides to
ask for the example's label on the go or to just throw it away.  While offering
fewer options on which data to label, this setting alleviates the scalability
challenge of storing and processing a large pool of examples in the pool-based
setting.

Another important aspect is the nature of the data: the instance-label pairs
might be sampled \emph{\iid}\ from a fixed distribution, or chosen
\emph{adversarially} by an adversary. While sometimes the
\iid~assumption is reasonable, there are practical scenarios where this
assumption fails to hold. These include cases where there are temporal or
spatial dependencies or non-stationarities in the dataset/stream. In these situations,
it may be safer not to make assumptions on the data and rather consider
worst-case data streams.
\paragraph{Contributions}
We develop a novel, principled and efficient model selection approach --\mpick-- for the online
setting.
Our query strategy is randomized
and leverages hypothetical query answers to decide which data
examples are likely to be informative for identifying the best model.  We prove
that our algorithm has no regret for adversarial streams, \ie, its performance
for sequential label prediction is close to the best model for that stream in
hindsight. Our bounds match (up to a constant) those of existing online
algorithms  that have access to all labels. We also establish bounds on the
number of label queries and the quality of the output model of \mpick. We
furthermore conduct extensive experiments, comparing our algorithm with a range of
other methods. To reach the same accuracy, competing methods can often require
up to 2.5$\times$ more labels.  Apart from the relative performance, on the
\textsc{ImageNet} dataset, \mpick\ requires a mere 13\% labeled instances to
select the best among 102 pre-trained models with 90\% confidence, while having
up to 1.3$\times$ lower regret. These results establish \mpick\ as the
state-of-the-art for this problem. We also make everything open and
reproducible.\footnote{The code is available at
\url{https://github.com/DS3Lab/online-active-model-selection}}


\section{RELATED WORKS}
Our approach relates to several bodies of literature. 
For each related area, we reference similar works that match the objective of
our paper.

\paragraph{Active Model Selection}
\citet{Madani2004ActiveModelSelection} develop their method for the online
setting. They seek to identify the best model via probing models, one at a time,
with \iid\ samples, while having a fixed budget for the number of probes. In
contrast, our approach applies even to adversarial streams and allows one to make
predictions online, while minimizing the number of queries made.  Most of other
previous works~\citep{Sawade2012ActiveModelComparison, Gardner2015BayesianAMS,
Alnur2014ALModelSelection, Sawade2010ActiveRiskEstimation, Katariya2012Active,
Kumar2018ClassRiskEst, Leite2010ActiveTesting} focus on pool-based sampling of
informative instances, where the learner ranks the entire pool of unlabeled data
and greedily selects the most informative examples. This setting substantially
differs from the streaming setting, and we focus on the latter for reasons of
scalability and applicability to many real-world situations.

\paragraph{Active Learning}
Active learning aims to query the label of those instances that help improving
the \emph{training} of classifiers, rather than selecting among pre-trained
models. Here we review those methods that can potentially be adapted for model
selection.  The celebrated query-by-committee (\qbc)
paradigm~\citep{Seung1992QBC} forms a committee of classifiers to vote on the
labeling of incoming examples.  The query decision is made based on the degree
of disagreement among the committee members. The general strategy is to query those
instances that help the learner prune the committee and only keep those classifiers
with higher accuracies. There are other \qbc\ approaches in active learning,
such as
\citet{Cohn1994ImpGenAL, Mccallum1998poolAL, Mamitsuka1998Boost, Melville2004Diverseensembles,
Settles2008Sequence, Zhu2007Active}. One limitation of these algorithms is that
they often focus on pool-based sampling, which limits
their scalability.  Several other approaches consider active learning in the
streaming setting. The seminal works of \citet{Dasgupta2008Agnostic} and \citet{Balcan2009AgnosticAL}, followed by
\citet{Beygelzimer2010Agnostic, Zhang2014DisagAL}, use
disagreement-based strategies.
The idea of using importance weights in active learning is studied by a series
of works including~\citet{Beygelzimer2008ImpWeightedAL, Masashi2006ALmisspec,
Beygelzimer2011EfficientAL, Bach2007ALMis}, where importance weights are
introduced to correct sampling bias and provide statistically consistent
convergence to the optimal classifier in the PAC learning setting. All of the
above approaches on stream-based active learning focus on \iid\ streams and try
to improve the supervised training of classifiers, whereas our approach applies
to the more general adversarial streams and performs no training.

\paragraph{Online Learning and Bandits}
Sequential label prediction is an important problem in online learning. The
setting closest to ours is {\em label-efficient prediction} (\LEP)
\citep{cesa2005minimizing}, where they query the label with a \emph{fixed}
probability at each round, and that probability also appears in the regret
bound. However, we use the side information of the models predictions to
\emph{adapt the probability} of querying to the information content of the
instance at hand, thereby significantly reducing the required labels in practice
and lowering the regret, as demonstrated theoretically and in our experiments.
Moreover, there is no study of the quality of the model outputted at the end of
the stream, for neither adversarial nor stochastic streams.  Another problem
similar to ours is {\em consistent online
learning}~\citep{jaghargh19a,altschuler2018online}, where the learner seeks to
minimize the number of switches of her actions, while observing the
loss every round, even if she does not update her strategy. In our setting,
however, we do {\em not} know the loss in the rounds we do not query.
Similar challenges arise in the multi-armed bandit literature.  In a way, our
setup lies between the usual prediction with experts advice and multi-armed
bandit problems. Our algorithm is related in spirit to the EXP3
algorithm~\citep{auer2002finite} for adversarial bandits. The key difference
is that EXP3 uses the probability of selecting an arm to construct an
unbiased loss estimator, whereas we consider the probability of observing the
\emph{whole} loss. While similar in spirit, the standard EXP3 analysis fails to
yield a regret bound, as discussed in the footnote of page \pageref{rem:on-omd}.

\section{PROBLEM STATEMENT AND BACKGROUND}

\looseness -1%
Assume that we have $k$ pre-trained classifiers (\emph{experts}). Let $\Xc$ and
$\Cc$ be the set of all possible inputs and classes, respectively. Our
sequential prediction problem is a game played in
rounds. Consider a stream of data $\crl{(\sample_t,c_t)\in\Xc\times \Cc}_{t\geq
1}$ generated by an unknown mechanism. At round $t$, $\sample_t$ together with
all classifiers predictions $\pred_t\in\Cc^k$ is revealed to the learner. She
then selects one of the experts $I_t\in[k]$\footnote{In here and what follows,
$[k]=\crl{1,\ldots,k}$.} and incurs a loss of 1 if that expert
misclassifies $\sample_t$. Finally, the learner decides whether to query the label $c_t$.
If no query is made, then $c_t$ remains hidden, otherwise, the learner observes
$c_t$ and the loss $\loss_t \in \{0,1\}^k$ defined as $\loss_{t,i} =
\indic{\pred_{t,i} \neq c_t}$, with $\indic{\cdot}$ being the indicator function.
Note that
$I_t$ can only depend on the past inputs and the observed labels. The goal of
the learner is to select $I_t$ in such a way that up to any round $T$, the total
misclassifications she makes is close to the total mistakes of the best expert
up to time $T$ in hindsight. This performance measure is formalized as the
\emph{regret} of the learner:
\[
	\Rc_T =
	\textstyle\sum_{t=1}^T \loss_{t,I_t} - \min_{i\in [k]} \sum_{t=1}^T
	\loss_{t,i}.
\]
A prediction strategy satisfying $\limsup_{t\to\infty} \Rc_t/t
\leq 0$, is called a \emph{no-regret} algorithm.

If the stream is generated by sampling $(\sample_t,c_t)$ \iid\ from a fixed
distribution, it is called a \emph{stochastic stream}, otherwise we call it
\emph{adversarial}, as if an oblivious adversary has chosen the stream for the
learner. It is known \citep{hazan2019introduction} that if the learner follows
a deterministic strategy, she can be forced by the adversary to have linear
regret. Hence, the learner should randomize and select $I_t \sim \dist_t$,
where $\dist_t$ is some distribution over the experts, reflecting how good the
learner thinks the experts are at round $t$. In this case, we are interested in
the \emph{expected regret} $\ex\brk{\Rc_T}$, where the expectation is \wrt\ the
(possible) randomness in the stream, as well as the randomness of the learner.
By the tower property of expectation, $\ex[\loss_{t,I_t}] = 
\ex[\ex[\loss_{t,I_t}\mid \dist_t, \loss_t]] =
\ex \inner{\dist_t, \loss_t}$, and hence, we could write 
\[
    \ex[\Rc_T] = \ex\big[\textstyle\sum_{t=1}^T \inner{\dist_t,\loss_t} - \min_{i\in [k]} \sum_{t=1}^T
    \loss_{t,i}\big].
\]

On top of the preceding task, it is often desirable that at each round $t$, the
learner \emph{recommends} (or outputs) an expert $\pi_t$ as the best expert so far. This
recommendation is suited for model selection tasks, where one needs not only the
predictions per round, but also a recommendation about which classifier is the
best one.  We measure the quality of $\pi_t$ in two ways: the probability of
returning the true best model of the stream so far (\emph{identification
probability}), and the gap between the accuracy of the recommended model and
the best one (\emph{accuracy gap}). The choice of measure depends on the
application: if one is interested only in identifying the best model, then the
first measure, and if one just cares about getting a model that has an accuracy
close to the best classifier, then the second measure is more relevant.

\section{ALGORITHM AND ANALYSIS}
In this section we set up the notation and present the \mpick\ algorithm, along
with several theoretical results regarding its performance.
\subsection{The Algorithm}\label{sec:the-algorithm}

At any round $t$, our algorithm, based on the predictions $\pred_t$ and current
distribution $\dist_t$ decides to query the label with probability $\query_t$ (to be
determined later). 
Let $\queryind_t \sim \mathrm{Ber}(q_t)$ be the indicator of querying.
Our algorithm then constructs a \emph{loss estimate} 
$\ellest_t = \loss_t/\query_t\cdot \queryind_t$.
With this trick, we can think that the learner observes the loss sequence
$\crl{\ellest_t}_{t\geq 1}$. We then construct $\dist_t$ similar to the
Exponential Weights (EW) algorithm \citep{littlestone1989weighted} for this
loss estimate sequence and with decaying
learning rates $\crl{\eta_t}$.
The detailed algorithm is depicted in \pref{alg:algorithm}. In what follows, we
also set $\Lest_t = \sum_{s\leq t} \ellest_s$ and $\Loss_t = \sum_{s\leq t}
\loss_s$.

\begin{algorithm}[t]
\SetAlgoLined
Set $\Lest_{0,i}=0$ for all $i \in [k]$

\For{$t=1,2,\ldots$}{

$\eta_t \coloneqq \sqrt{(\log k)/(2t)}$

Compute the distribution $\dist_t$ over models, with $\dist_{t,i} \propto \exp \{-\eta_t \Lest_{t-1, i} \}$

Get predictions $\pred_t$ of models for the observed data instance $\sample_t$

Recommend $\pi_t := \argmax_{i\in [k]} \dist_{t,i}$ as the best model up to round
$t$

Sample $I_t \sim \dist_t$ and output $\pred_{t,I_t}$ as the predicted label for this instance

Compute $\query_t$ as in \pref{eq:definition-update-probability} and sample $\queryind_t\sim \text{Ber}(\query_t)$

\eIf{$\queryind_t=1$}{
Query the label $c_t$

$\Lest_{t, i} = \Lest_{t-1, i} +\frac{1}{\query_t}\indic{\pred_{t, i}\neq c_t}$, $\forall i \in [k]$
}
{
    $\Lest_{t,i} = \Lest_{t-1,i},\quad \forall i \in [k]$
}
}
\caption{\mpick}
\label{alg:algorithm}
\end{algorithm}

\paragraph{Query Probability} 
Instead of observing $c_t$ with a constant probability (as done by
\citet{cesa2005minimizing}), we {\em adaptively} set this probability according
to the predictions $\pred_t$ and our current distribution over the experts
$\dist_t$.
Notice that, based on the predictions, we know that the true loss vector
$\loss_t$ is among $\crl{\loss_t^{c} : c\in \Cc}$, where $\loss_t^{c}$  is
the hypothetical loss vector if the true label was $c$, \ie, $\loss_{t,i}^c =
\indic{\pred_{t,i}\neq c}$. 
We define
\begin{align*}
    \querybef(\pred_t, \dist_t) &= \max_{c\in \Cc} \mathop{\mathrm{Var}}_{J \sim \dist_t}
            \loss^{c}_{t,J} \nonumber \\
        &= \max_{c\in\Cc} \inner{\dist_t,\loss^{c}_t}(1-\inner{\dist_t,\loss^{c}_t})
\end{align*}
to be the maximum possible variance among different possible losses \wrt\
the distribution $\dist_t$, and we set
\begin{align}\label{eq:definition-update-probability}
	\query_t = \begin{cases}
        \max\crl{\querybef(\pred_t, \dist_t), \eta_t} & \text{if }\querybef(\pred_t, \dist_t) \neq 0 \\
		0 & \text{otherwise.}
		\end{cases}
\end{align}
When $\querybef(\pred_t,\dist_t)$ is nonzero, as seen above, we utilize a lower bound
on $\query_t$ to prevent unboundedness issues. This lower bound, however, decreases
over time.

The intuition behind the definition of $\querybef(\pred_t, \dist_t)$ is as follows.
Hypothetically, if the true label is $c$ and we observe it, the distribution
$\dist_t$ over the models would be updated to $\dist_{t+1}$ according to the loss
$\loss_t^{c}$. If we miss this update, as shown in
\pref{app:var-close-to-kl}, the amount of regret we accumulate (due to
not updating $\dist_t$ to $\dist_{t+1}$) is proportional to the variance of
$\loss_t^{c}$.
Hence, the maximum variance among all hypothetical losses is a measure of
the importance of the instance at hand and we use this value in our query
probability. Note that if all models make the same prediction,
observing the true label has no effect on the regret, and this behaviour is
also reflected in \pref{eq:definition-update-probability}, as $\query_t$ would be
equal to zero in this case.

In what follows, we first tackle the general case of adversarial streams and
prove bounds on regret, number of queries, and accuracy gap of \mpick. We then
strengthen our results for the stochastic setting and give improved bounds as
well as a bound for the identification probability. All omitted proofs can
be found in \pref{app:proofs}.

\paragraph{Notation} In what follows we define the conditional expectation
$\ex_t[\cdot] := \ex[\cdot\mid \Fc_{t-1}]$, where $\Fc_{t-1}$ is the
$\sigma$-algebra generated by all the random variables up to and including time
$t-1$. Moreover, denote by $a\vee b := \max\crl{a,b}$. We use $\inner{\cdot,\cdot}$
to denote the inner product of vectors. For a label $c\in\Cc$, we set
$\dist_{t,c} := \inner{\dist_{t}, \loss_t^c}$.

\subsection{Guarantees for Adversarial Streams}
We first prove that our algorithm has no regret. It is known
\citep{cesa1997use} that the regret of any online algorithm that observes all of
the labels is at least $\Omega(\sqrt{T\log k})$. Our regret bound matches this
lower bound, even though we do not see all the labels.  Compared to LEP, our
regret bound is smaller: they prove that for a fixed query probability
$\epsilon$, the regret is
bounded by $\sqrt{2T\log k / \epsilon}$, and  for getting a regret of
$O(\sqrt{T\log k})$ one has to set $\epsilon$ to be a constant. This forces the
number of queries to be linear in $T$.
However, there are no additional terms in our regret bound, as the
probability of querying is adapted to the stream.
\begin{theorem}[Regret]\label{thm:main-regret}
For adversarial streams, the expected regret of \pref{alg:algorithm} is bounded
above by
\[
	\ex\brk{\Rc_T} \leq 2 \sqrt{2T\log k}.
\]
\end{theorem}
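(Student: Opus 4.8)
The plan is to run the exponential-weights (EW) regret analysis on the \emph{estimated} loss sequence $\crl{\ellest_t}_{t\geq1}$ and to show that the adaptive query rule is calibrated so that the resulting second-order term stays bounded. First I would dispose of the degenerate rounds: following the footnote, I restrict to the \emph{effective} rounds, on which $\ell_t$ is neither all-zeros nor all-ones, since the removed rounds change neither $\sum_t\ell_{t,I_t}$ nor $\min_i\sum_t\ell_{t,i}$ and hence leave $\Rc_T$ untouched. On an effective round the softmax weights $\dist_t$ are strictly positive, so $\inner{\dist_t,\ell_t}\in(0,1)$, whence $\querybef(\pred_t,\dist_t)\geq\inner{\dist_t,\ell_t}(1-\inner{\dist_t,\ell_t})>0$ and therefore $\query_t=\querybef\vee\eta_t\geq\eta_t>0$. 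Thus $\ellest_t=\ell_t\queryind_t/\query_t$ is conditionally unbiased, $\ex\brk{\ellest_{t,i}\mid\mc F_{t-1}}=\ell_{t,i}$, because $\query_t$ is $\mc F_{t-1}$-measurable for an oblivious stream. Fixing the deterministic best expert $i^\star=\argmin_i\sum_t\ell_{t,i}$ and combining $\ex\brk{\ell_{t,I_t}\mid\mc F_{t-1}}=\inner{\dist_t,\ell_t}=\ex\brk{\inner{\dist_t,\ellest_t}\mid\mc F_{t-1}}$ with $\ell_{t,i^\star}=\ex\brk{\ellest_{t,i^\star}}$ and $\ex\brk{\min_i\Lest_{T,i}}\leq\min_i L_{T,i}$ gives
\[
\ex\brk{\Rc_T}\leq\ex\Big[\textstyle\sum_{t}\inner{\dist_t,\ellest_t}-\min_i\Lest_{T,i}\Big].
\]

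Next I would invoke the anytime EW bound. For $\dist_{t,i}\propto\exp(-\eta_t\Lest_{t-1,i})$ with non-increasing $\eta_t$ and nonnegative $\ellest_t$, the standard time-varying FTRL/EW argument yields, pathwise,
\[
\textstyle\sum_t\inner{\dist_t,\ellest_t}-\min_i\Lest_{T,i}\leq\frac{\log k}{\eta_T}+\sum_t\frac1{\eta_t}\delta_t,\qquad \delta_t:=\eta_t\inner{\dist_t,\ellest_t}+\log\inner{\dist_t,e^{-\eta_t\ellest_t}},
\]
where the mixability gap satisfies $\delta_t\geq0$ by Jensen. Here the textbook step $\delta_t\leq\tfrac{\eta_t^2}2\inner{\dist_t,\ellest_t^2}$ is fatal, since $\ex\brk{\inner{\dist_t,\ellest_t^2}\mid\mc F_{t-1}}=\inner{\dist_t,\ell_t}/\query_t$ can be of order $1/\query_t$; this is exactly the failure of the second-order EXP3 analysis, and the reason the refined treatment of \pref{app:var-close-to-kl} is needed.

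The crux is therefore to evaluate $\ex\brk{\delta_t\mid\mc F_{t-1}}$ exactly rather than bound it by the second moment. Writing $a_t:=\inner{\dist_t,\ell_t}\in(0,1)$ and using $\ell_{t,i}\in\crl{0,1}$, on the event $\queryind_t=1$ one has $\inner{\dist_t,e^{-\eta_t\ellest_t}}=1-a_t(1-e^{-\eta_t/\query_t})$ exactly, while on $\queryind_t=0$ the gap vanishes; averaging over $\queryind_t\sim\mathrm{Ber}(\query_t)$,
\[
\ex\brk{\delta_t\mid\mc F_{t-1}}=\eta_t a_t+\query_t\log\!\big(1-a_t(1-e^{-\eta_t/\query_t})\big)=\query_t\,h(r),\quad r:=\eta_t/\query_t\leq1 .
\]
A short monotonicity computation (bounding $h'$ and integrating from $0$) shows $h(r)\leq a_t(1-a_t)(e^{r}-1-r)\leq a_t(1-a_t)r^2$ on $[0,1]$, so that
\[
\ex\brk{\delta_t\mid\mc F_{t-1}}\leq\eta_t^2\,\frac{a_t(1-a_t)}{\query_t}\leq\eta_t^2,
\]
where the final inequality is the entire point of the definition of $\query_t$: since $\query_t\geq\querybef(\pred_t,\dist_t)\geq a_t(1-a_t)=\mathrm{Var}_{I\sim\dist_t}\ell_{t,I}$, the variance-over-query-probability ratio is at most one. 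I expect this to be the main obstacle: one must retain the second-order Taylor correction (the $-\tfrac12\inner{\dist_t,\ellest_t}^2$ term that converts the second moment into the \emph{variance}), and one must use the floor $\query_t\geq\eta_t$ to guarantee $r\leq1$ so that $e^r-1-r\leq r^2$ holds.

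Finally I would assemble the pieces. Taking expectations and plugging $\ex\brk{\delta_t\mid\mc F_{t-1}}\leq\eta_t^2$ into the EW bound gives
\[
\ex\brk{\Rc_T}\leq\frac{\log k}{\eta_T}+\sum_{t=1}^T\frac{\eta_t^2}{\eta_t}=\frac{\log k}{\eta_T}+\sum_{t=1}^T\eta_t .
\]
With $\eta_t=\sqrt{(\log k)/(2t)}$ we have $\log k/\eta_T=\sqrt{2T\log k}$, and using $\sum_{t\leq T}t^{-1/2}\leq2\sqrt{T}$ we get $\sum_{t\leq T}\eta_t\leq\sqrt{2T\log k}$ as well, so $\ex\brk{\Rc_T}\leq2\sqrt{2T\log k}$, as claimed.
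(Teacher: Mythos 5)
Your proposal is correct, and its skeleton coincides with the paper's own proof: you pass to the estimated losses $\crl{\ellest_t}$ via unbiasedness and Jensen applied to the min, you use the mix-loss decomposition with the anytime bound $\log k/\eta_T$ (this is the paper's \pref{lem:mixloss}; you assert it as standard, which is fair, though it does rely on the decaying-learning-rate monotonicity property the paper proves there), and you reduce everything to the per-round claim $\ex\brk*{\delta_t\mid \Fc_{t-1}}\leq\eta_t^2$ for the mixing gap $\delta_t=\eta_t\inner{\dist_t,\ellest_t}+\log\inner{\dist_t,e^{-\eta_t\ellest_t}}$, which is exactly the paper's \pref{lem:mixinggap}, and your exact evaluation of $\ex\brk*{\delta_t\mid\Fc_{t-1}}$ matches the paper's expression \pref{eq:rhsupper}. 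Where you genuinely depart is in how that inequality is then established. The paper proves it via \pref{lem:regretinequality}: a two-case analysis ($\eta\leq x(1-x)$ versus $\eta>x(1-x)$) using Taylor-type bounds on $e^y$, the auxiliary inequality of \pref{lem:expbound}, and a separate monotonicity-in-$u$ argument to pass from $x(1-x)\vee\eta$ to the actual $\query_t$. You instead set $x=a_t$, $h(r)=rx+\log\prn*{1-x(1-e^{-r})}$, compute the exact derivative $h'(r)=x(1-x)(1-e^{-r})/\prn*{1-x+xe^{-r}}\leq x(1-x)(e^{r}-1)$, integrate to get $h(r)\leq x(1-x)(e^{r}-1-r)\leq x(1-x)r^{2}$ for $r\leq 1$, and conclude from $\query_t\geq a_t(1-a_t)\vee\eta_t$; I verified this derivative computation and it is sound. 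This single unified argument replaces the paper's case split and both auxiliary lemmas, and it has the expository advantage of showing precisely where each floor in \pref{eq:definition-update-probability} enters: the variance floor $\query_t\geq a_t(1-a_t)$ cancels the $1/\query_t$ blow-up (turning a second-moment bound into a variance-type bound), while the floor $\query_t\geq\eta_t$ guarantees $r=\eta_t/\query_t\leq 1$ so that $e^{r}-1-r\leq r^{2}$. Both routes give the same constant and the same final assembly $\log k/\eta_T+\sum_{t\leq T}\eta_t\leq 2\sqrt{2T\log k}$.
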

\begin{proof}
    We bring a few important observations that help us in the proof. Observe
    that we can remove those rounds where $\loss_{t,i}=1, \forall i\in[k]$,
    since expert $I_t$ and the best expert in hindsight make the same
    mistake at round $t$. In the remaining rounds, $\ellest_t$ has the same
    conditional expectation as $\loss_t$:
    \[
        \ex_t[\ellest_t] \stackrel{\mathrm{(a)}}{=} \ex_t[\ex_t[\ellest_t\mid \pred_t,\loss_t]]
        \stackrel{\mathrm{(b)}}{=} \ex_t[\tfrac{\loss_t}{q_t}\ex_t[Q_t\mid
        \pred_t,\loss_t]] = \loss_t,
    \]
    where (a) is by the tower property of expectation and (b) is by the definition
    of $\ellest_t$ and the fact that $q_t$ is
    $\sigma(\pred_t,\Fc_{t-1})$-measurable.
    This, together with 
    $
        \textstyle\ex\brk{\min_{i\in [k]} \Lest_{T,i}} \leq
	\min_i\ex\brk{\Lest_{T,i}} \leq \min_i \Loss_{T,i},
    $
    immediately implies that the expected regret of the algorithm for the
    loss sequence $\crl{\loss_t}$ is upper bounded by the expected regret for
    $\crl{\ellest_t}$. Hence, in what follows, we bound the expected regret for
    the latter.

    The expected regret can be decomposed as
    \[
        \ex[\Rc_T] = \ex(\textstyle\sum_{t} m_t - \Lest_{T,*}) +
        \textstyle\ex\sum_t\ex_t[\inner{\dist_t, \ellest_t} - m_t],
    \]
    where $m_t := -\eta_t^{-1}\log \inner{\dist_t, e^{-\eta_t\ellest_t}}$ is the
    mix loss and $\Lest_{T,*} := \min_i \Lest_{T,i}$. Bounding the first part is
    standard and by \pref{lem:mixloss}, it is at most $\log k / \eta_T$. For
    the second term in the regret decomposition, we show in \pref{lem:mixinggap}
    that the $t$\textsuperscript{th} term in the sum is bounded by $\eta_t$. Our proof of this
    lemma heavily relies on how we defined $\querybef(\pred_t,\dist_t)$ and
    the form of our estimated losses.
    Plugging in $\eta_t = \sqrt{\log k / 2t}$ finishes the proof.%
\footnote{\label{rem:on-omd}The attentive reader familiar with OMD/FTRL might
	have realized that the proof deviates from the usual proof methods. In a nutshell,
	if we consider a general regularizer, following the usual proofs, one has to
	bound the stability of the algorithm, which boils down to bounding
	$\nrm{\ellest_t}_{t,*}^2$ by a constant, where $\|\cdot\|_t$ is the local
	norm at round $t$ induced by the inverse Hessian of the regularizer. As
	$\ellest_t \in [0, 1/\eta_t]$, it can scale up to $O(\sqrt{T})$ and there is
no trivial way to bound the norm, as the norms are equivalent in $\Rbb^k$.}
\end{proof}

Our next result concerns the number of queries. We show that in the adversarial
setting, this number depends linearly on the total mistakes of the best model
(not taking into account the rounds where all models misclassify the instance). For
example, if the best model is perfect, the query count is $O(\sqrt{T})$.

\begin{theorem}[Queries]\label{thm:nquery-adv}
    Assume that in every round there are at least two models that disagree. Also
    assume that the total number of mistakes of the best model satisfies
    $\Loss_{T,*} \leq (\frac{|\Cc|-1}{|\Cc|}- \epsilon)T$ for some $\epsilon>0$.
    Then, for $T \geq 4\log k /\epsilon^2$, the expected number of queries up to
    round $T$ is at most $5\sqrt{T\log k}+ 2\Loss_{T,*}$. 
\end{theorem}
\begin{proof}
    The main idea is to relate the number of updates to
    the regret. First, we bound $\query_t$ from above by $\eta_t + \sum_{c\in
    \Cc}\dist_{t,c}(1-\dist_{t,c})$, as maximum is smaller than the sum. Then,
    using the concavity of $a(1-a)$ and Jensen's inequality we further  bound the
    sum over classes by $r_t\prn{2 - \tfrac{|\Cc|}{|\Cc|-1}r_t}$, where $r_t =
    \inner{\dist_t, \ell_t}$.  The proof finishes by summing over
    $t$ and carefully invoking Jensen's inequality again.
\end{proof}
\begin{remark}
    If all models are bad (\ie, if $L_{T,*} \approx T$), then our algorithm
\emph{can} query a lot, and the bound above is not loose. A simple adversarial
example is illustrated in \pref{app:example-large-updates}. Better
bounds on the number of queries are possible with more assumptions on the
stream, e.g., when the stream is stochastic.
\end{remark}

\looseness -1 We now consider the quality of \mpick's recommendations for model
selection.  In the full generality of the adversarial setting, one cannot say
much about the identification probability. However, if we restrict the adversary
and assume that after some round $t_0$, the cumulative loss of the models
start to deviate and keep a minimal gap, we can give a sharp lower bound
on the identification probability, as well as a stronger bound on accuracy gap.
We call an adversary $(t_0, \Delta)$-\emph{restricted} if there exists some
expert $i^* \in [k]$ so that for all $t\geq t_0$, $\Loss_{t,j} \geq
\Loss_{t,i^*} + \Delta t$ for all $j\neq i^*$.

If the algorithm recommends $\pi_t$ at round $t$, its accuracy gap is defined as
$\frac{1}{t}(\Loss_{t, \pi_t} - \Loss_{t,i^*})$ and its identification
probability is $\Pr\crl{\pi_t = i^*}$, where $i^*$ is the best model up to round
$t$, \ie, $i^* = \argmin_{i\in[k]} \Loss_{t,i}$ (notice the use of $\Loss$
instead of $\Lest$ in both definitions).

\begin{theorem}[Accuracy Gap]\label{thm:acc-gap-adv}
Under no assumptions on the adversary, modify the algorithm to recommend $\pi_t
= I_\tau$, where $\tau\in[t]$ is selected uniformly at random. Then, to reach
an expected accuracy gap of at most $\epsilon$, it is enough to have 
$
	t \geq 8\cdot\log k/ \epsilon^2.
$

Moreover, if the adversary is $(t_0, \Delta)$-restricted, by recommending
$\pi_t = \argmax_{i\in[k]} \dist_{t,i}$ and 
\[
	t \geq \min\crl*{31\cdot\frac{\log k}{\Delta^4} \log^2(\tfrac{1}{\epsilon}),
	t_0},
\]
one gets an expected accuracy gap of at most $\epsilon$.
\end{theorem}
The proof of the first part is based on our regret bound and is standard. The
second part is a simple corollary of \pref{thm:iden-adv} below.
The difference between the two guarantees is twofold: while the first guarantee
is instance independent, its dependence on $1/\epsilon$ is quadratic. However,
the second guarantee comes with poly-logarithmic dependence on $1/\epsilon$, but with
an instance-dependent constant $1/\Delta^4$.
\begin{theorem}[Identification Probability]\label{thm:iden-adv}
    If the adversary is $(t_0,\Delta)$-restricted, the probability that we
    misidentify the best model at round $T\geq t_0$ is at most
    \[
        \Pr\crl*{\pi_T \neq i^*} \leq k\cdot e^{-0.18\Delta^2\sqrt{T\log k}}
    \]
\end{theorem}
This theorem, together with \pref{thm:ident-stoch} below, clearly shows why \mpick\ is
successful in model selection tasks, as the probability of misidentifying the
best model decreases (close to) exponentially fast, even if the stream is (restricted)
adversarial. The proof is similar to \pref{thm:ident-stoch} and is based on
martingale arguments.

\subsection{Guarantees for Stochastic Streams}
In this section, we assume that the stream is \iid\ and provide stronger
results.  Let $i^* \in [k]$ be the model with the highest expected accuracy, and
define $\Delta_j = \ex\brk{\loss_{\cdot, j} - \loss_{\cdot, i^*}}$ for all
$j\in[k]$ to be the gap between the accuracies of model $j$ and the best model.
Also define $\theta_j = \Pr\crl{\loss_{\cdot, j} \neq \loss_{\cdot, i^*}}$ to be
the probability that exactly one of $j$ and $i^*$ correctly classify a sample.
Define
\[ 
    \lambda = \min_{j\in [k]\setminus \{i^*\}} \Delta_j^2 / \theta_j.
\]
Intuitively, $\lambda$ measures the hardness of the instance for our algorithm.
Set $\Delta = \min_{i\neq i^*}\Delta_i$ and assume that $\Delta > 0$ (\ie,
there is a unique best model). To simplify the exposition, we always assume,
w.l.o.g., that in all rounds at least two models disagree, as the rounds in
which all models agree do not contribute to the regret or to the number of
queries. The \emph{pseudo-regret} is defined as $R_T = \ex\sum_t
\inner{\dist_t, \loss_t} - T\Delta$.

We first improve \pref{thm:nquery-adv} and show on average \mpick\ asks
$O(\sqrt{T\log k}\cdot |\Cc|/\Delta)$ labels. The dependence on $1/\Delta$ has
the following intuition: it takes on average $1/\Delta$ rounds to observe an
instance where the best model performs better than the rest. The bound shows
that \mpick\ needs no more than $O(\sqrt{T\log k})$ of these instances to build
up sufficient confidence in the best model.
\begin{theorem}[Queries]\label{thm:nquery-stoch}
The expected number of queries up to round $T$ is bounded by
\[
	\ex\brk*{{\textstyle \sum_{t=1}^T \queryind_t}} \leq \sqrt{2T\log k}(1 + 4\tfrac{|\Cc|}{\Delta}).
\]
\end{theorem}
\begin{proof}
\looseness -1 Notice that the expected regret is lower bounded by the
pseudo-regret and upper bounded by our adversarial regret bound
(\pref{thm:main-regret}). These bounds imply 
\[
	\ex\textstyle\sum_{t=1}^T (1-\dist_{t,i^*})\Delta \leq R_T \leq \ex\brk{\Rc_T} \leq 2\sqrt{2T\log k}.
\]
Hence, $\ex\sum_{t=1}^T(1-\dist_{t,i^*}) \leq \frac{2\sqrt{2T\log k}}{\Delta}$.
This means that $\dist_{t,i^*} > \frac{1}{2}$ most of the times: if $N$ is
the number of rounds such that $\dist_{t,i^*} \leq \frac{1}{2}$, we have
\[
	\frac{1}{2}\ex N  \leq \ex\sum_{t=1}^T(1-\dist_{t,i^*}) \leq  \frac{2\sqrt{2T\log k}}{\Delta}.
\]
Now, by the definition of $\query_t$ we have
$
	\query_t \leq \eta_t + \sum_{c\in \Cc} \dist_{t,c}(1- \dist_{t,c}).
$
For a class $c\in\Cc$ that is present among the models predictions at round $t$, we can write
$
	\dist_{t,c}(1- \dist_{t,c}) = (\dist_{t,i^*} + a)\cdot b,
$
for some $a,b\geq 0$ with $b\leq 1-\dist_{t,i^*}$. When $\dist_{t,i^*} \geq
\tfrac{1}{2}$, we have $\query_t \leq b \leq 1 - \dist_{t,i^*}$. If $\dist_{t,i^*} \leq
\tfrac{1}{2}$ we bound $\query_t$ by $\tfrac{1}{4}$. Summing over $t$ and using the bound on $N$, we
finishes the proof.
\end{proof}
The next three results are parallel to the ones in the previous section. By
adopting careful martingale arguments, we first show that the probability of
misidentifying the best model decreases (close to) exponentially with a rate
depending on $\lambda$.

\begin{theorem}[Identification Probability]\label{thm:ident-stoch}
For $T > 2\log k$, the probability that we misidentify the best model at round
$T$ is at most
\[
	\Pr\crl*{\pi_T \neq i^*} \leq k\cdot e^{-0.18\lambda \sqrt{T\log k}}.
\]
\end{theorem}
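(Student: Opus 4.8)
The plan is to reduce the claim to a one-sided large-deviation bound for a single competing model and then union bound. Since $\pi_T = \argmax_{i} \dist_{T,i} = \argmin_{i} \Lest_{T-1,i}$, a misidentification forces $\Lest_{T-1,j} \le \Lest_{T-1,i^*}$ for some $j \neq i^*$, so
\[
\Pr\crl*{\pi_T \neq i^*} \le \sum_{j\neq i^*} \Pr\crl*{\Lest_{T-1,j} - \Lest_{T-1,i^*} \le 0},
\]
which supplies the prefactor $k$. Fix $j$ and set $D_s = \ellest_{s,j} - \ellest_{s,i^*} = \tfrac{\queryind_s}{\query_s}(\ell_{s,j}-\ell_{s,i^*})$, writing $\delta_s = \ell_{s,j}-\ell_{s,i^*}\in\crl{-1,0,1}$. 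Let $\mc{F}_{s-1}$ be the history before round $s$. Since the stream is \iid\ and $\queryind_s\sim\text{Ber}(\query_s)$, conditioning first on the incoming instance (which fixes $\delta_s$ and $\query_s$) and then averaging gives $\ex\brk{D_s \mid \mc{F}_{s-1}} = \ex\brk{\delta_s} = \Delta_j > 0$. Thus $\sum_{s\le T-1} D_s$ has mean $(T-1)\Delta_j$ and I must control its lower tail. The difficulty is that $\abs{D_s}\le 1/\query_s$ can be as large as $1/\eta_s \asymp \sqrt{s/\log k}$, so the increments are unbounded and no bounded-difference inequality applies directly.

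The heart of the proof is a bound on the conditional exponential moment for $\mu>0$. Conditioning on the round's instance,
\[
\ex\brk*{e^{-\mu D_s}\mid \text{instance}} = (1-\query_s) + \query_s\, e^{-\mu\delta_s/\query_s}.
\]
I would restrict $\mu \le \eta_{T-1} \le \query_s$ (valid because $\query_s\ge\eta_s\ge\eta_{T-1}$ by the definition of $\query_t$ and the standing assumption that two models disagree each round), so that the exponent $\mu/\query_s$ stays in $[0,1]$. Taylor-expanding the three cases $\delta_s\in\crl{-1,0,1}$ with $1-e^{-z}\ge z-z^2/2$ and $e^{z}-1\le z+z^2$, then using $\query_s\ge\eta_s$ and averaging over the instance, yields
\[
\ex\brk*{e^{-\mu D_s}\mid \mc{F}_{s-1}} \le \exp\crl*{-\mu\Delta_j + C\,\theta_j\,\mu^2/\eta_s}
\]
for an absolute constant $C$, where the variance proxy $\theta_j=\Pr\crl{\delta_s\neq0}$ surfaces because $\ex\brk{\delta_s^2}=\theta_j$. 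This is the step I expect to be the main obstacle: the factor $1/\query_s$ sits inside the exponential, $\query_s$ is \emph{not} $\mc{F}_{s-1}$-measurable (it depends on the arriving instance), and the whole point is to separate cleanly the drift $\mu\Delta_j$ from the curvature $\theta_j\mu^2/\eta_s$ so that the ratio $\Delta_j^2/\theta_j$ emerges at the end.

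With this in hand, the process $\exp\crl{-\mu\sum_{s\le t}D_s + \mu\Delta_j t - C\theta_j\mu^2\sum_{s\le t} 1/\eta_s}$ is a supermartingale of initial value $1$, so Markov's inequality gives
\[
\Pr\crl*{{\textstyle\sum_{s\le T-1}}D_s \le 0} \le \exp\crl*{-\mu\Delta_j(T-1) + C\theta_j\mu^2 {\textstyle\sum_{s\le T-1}} 1/\eta_s}.
\]
Since $\sum_{s\le T-1} 1/\eta_s = \sqrt{2/\log k}\sum_{s\le T-1}\sqrt{s} \asymp \tfrac{2\sqrt2}{3}\,T^{3/2}/\sqrt{\log k}$, minimizing the quadratic in $\mu$ gives $\mu^\star \asymp \Delta_j\sqrt{\log k/T}/\theta_j$ and an exponent of order $-(\Delta_j^2/\theta_j)\sqrt{T\log k}$. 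I would then verify that $\mu^\star\le\eta_{T-1}$, which follows from $\Delta_j\le\theta_j$ together with $T>2\log k$ (keeping $\eta_T$, and hence the feasible $\mu$, small enough), so the constraint imposed in the moment bound is met. Finally, using $\Delta_j^2/\theta_j\ge\lambda$ and tracking the constants (including the harmless $T-1$ versus $T$ discrepancy, which only affects lower-order terms) yields $\Pr\crl{\Lest_{T-1,j}\le\Lest_{T-1,i^*}}\le e^{-0.18\lambda\sqrt{T\log k}}$; summing over the at most $k$ competitors completes the proof.
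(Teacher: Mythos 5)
Your proposal is correct, and it reaches the theorem by a genuinely different, more self-contained route than the paper, though the skeleton is shared: both proofs union bound over the $k-1$ competitors, reduce to the lower tail of $\sum_t (\ellest_{t,j}-\ellest_{t,i^*})$, and exploit exactly the two structural facts that the conditional second moment of this increment is at most $\theta_j\eta_t^{-1}$ (since $\query_t \ge \eta_t$) and that $\Delta_j \le \theta_j$. The difference is in how the concentration is executed. The paper applies a packaged Freedman-type martingale inequality to $\xi_t = \Delta_j - \ellest_{t,j}+\ellest_{t,i^*}$, with worst-case variance proxy $\nu = T\eta_T^{-1}\theta_j - T\Delta_j^2$ and range $c = \Delta_j + \eta_T^{-1}$, and then solves a quadratic in $\sqrt{\beta}$ for the largest admissible rate; the hypothesis $T > 2\log k$ enters there through the condition $\Delta_j^2\eta_T \le \theta_j/2$. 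You instead hand-roll the Chernoff bound, and your resolution of the step you flag as the main obstacle is exactly right: $\query_s$ is indeed not $\Fc_{s-1}$-measurable, but conditioning on the arriving instance, using $\mu \le \eta_{T-1}\le\eta_s\le\query_s$ so the exponent $\mu/\query_s$ stays in $[0,1]$, bounding $1/\query_s \le 1/\eta_s$, and then averaging over the \iid\ draw gives $\ex\brk*{e^{-\mu D_s}\mid \Fc_{s-1}} \le \exp\crl*{-\mu\Delta_j + \mu^2\theta_j/\eta_s}$ (with $C=1$, via $1-e^{-z}\ge z - z^2/2$ and $e^z \le 1+z+z^2$ on $[0,1]$). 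The optimizer then satisfies $\mu^\star \le \tfrac{3}{4}\eta_{T-1}$ precisely because $\Delta_j\le\theta_j$, and the exponent comes out as roughly $-\tfrac{3}{8\sqrt{2}}\lambda\sqrt{T\log k} \approx -0.26\,\lambda\sqrt{T\log k}$; the $(T-1)$-versus-$T$ slack costs a factor $(1-1/T)^2$, which is only significant when $T \lesssim 31\log k/\lambda^2$, a regime in which the claimed bound exceeds $1$ and is vacuous, so your constants do close. What each approach buys: the paper's proof is shorter given the external lemma; yours avoids any black-box inequality, uses the sharper per-round sum $\sum_{s\le T}\eta_s^{-1} \approx \tfrac{2\sqrt{2}}{3}T^{3/2}/\sqrt{\log k}$ instead of the cruder $T\eta_T^{-1}$, and as a result yields a slightly better constant in the exponent.
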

\begin{proof}[Proof Sketch]
	Notice that $\xi_t = \Delta_j - \ellest_{t,j} + \ellest_{t,i^*}$ is a martingale
	difference sequence. Using a variation of Freedman's inequality for
	martingales and a careful analysis, one arrives at the theorem. 
\end{proof}
Bounds on accuracy gap follow easily. The idea is that by
\pref{thm:ident-stoch}, the best arm is always recommended, except for a
constant number of rounds.
\begin{theorem}[Accuracy Gap]\label{thm:acc-stoch}
For
\[
	T \geq 31\cdot\frac{\log^2(k\max_i\Delta_i)}{\lambda^2 \log
	k}\log^2(\tfrac{1}{\epsilon}),
\]
recommending $\pi_T$ results in an expected accuracy gap of at most $\epsilon$.
\end{theorem}
To bound the regret, \pref{thm:main-regret} is still applicable. Additionally,
if one predicts according to $I_t = \pi_t$ (a.k.a. Follow The Leader strategy),
the following theorem shows that the pseudo-regret is bounded by a {\em constant}.%
\footnote{In full information, when one observes all the
labels, the FTL strategy fails to have the no-regret property in the adversarial setting.
However, it has been shown that it favors a constant regret bound in stochastic
settings. We show that our algorithm has the same behaviour.}
\begin{theorem}[Regret]\label{thm:regret-stoch}
If in \pref{alg:algorithm} one sets $I_t = \pi_t$ for all $t$, then the
pseudo-regret bounded by a constant: 
\[
	R_T \leq 62\max_i \Delta_i\tfrac{k}{\lambda^2 \log k}.
\]
\end{theorem}


\section{EXPERIMENTS}\label{sec:experiments}

\begin{figure*}[t!]
    \centering
      \includegraphics[width=1\linewidth]{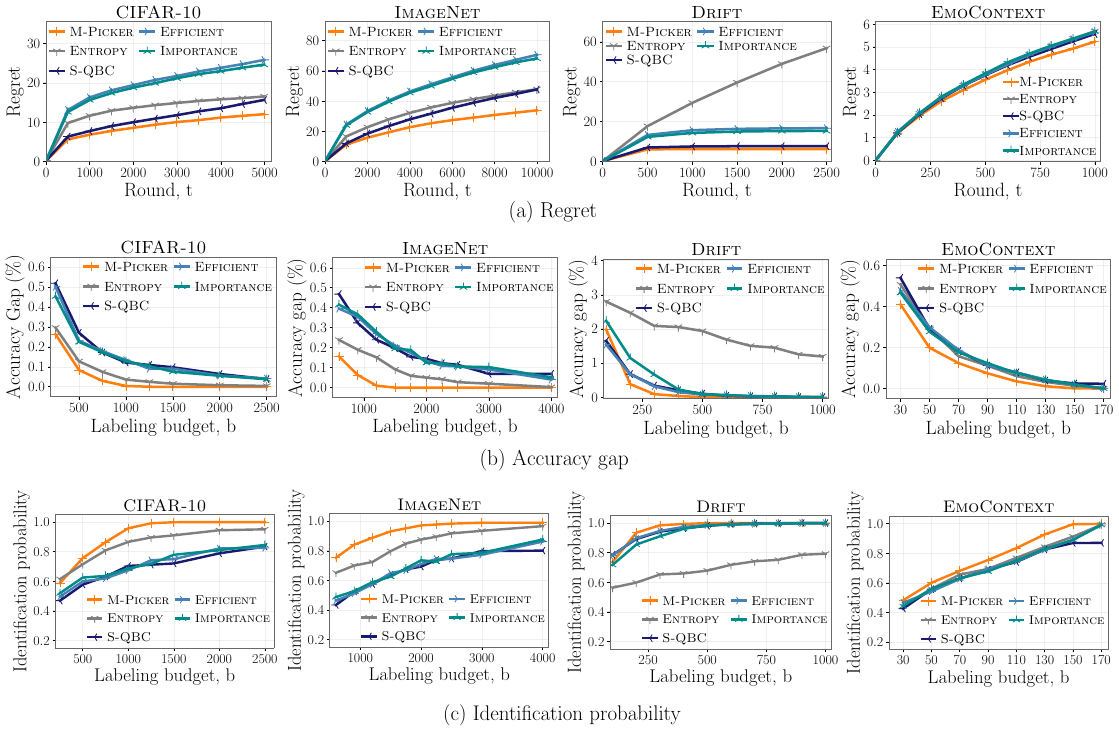}
        \caption{Performance of $\mpick$ (\textsc{M-Picker}) and other adapted baselines on four datasets \{\cifar, \imagenet, \drift, \emo\}.
        \mpick\ is able to output the true best model with high probability, while querying up to 2.6$\times$ fewer labels than the best competing method.}
        \label{fig:combined-plots}
        \vspace{-1em}
\end{figure*}
\looseness -1 We conduct an extensive set of experiments to demonstrate the
practical performance of \mpick\ for online model selection and 
sequential label prediction. We first run experiments on common data sets where
the instances come \iid\ from a fixed data distribution. This setting allows us
to empirically assess the performance in the stochastic setting. We then
consider a more challenging scenario where examples come from a drifting data
distribution, which we treat as an adversarial stream.
\paragraph{Datasets and Model Collection}
We conduct our experiments using various models trained on common datasets such
as the SemEval 2019 dataset (\emo) for emotion detection~\citep{Semeval2019} and the long-term gas sensor drift dataset (\drift) from the UCI Machine
Learning Repository~\citep{uci2012sensordrift, vergara2012chemical} as well as on more
complex datasets of natural images such as \cifar\ and
\imagenet. These datasets cover a wide range of scale:
\cifar, \emo\ and \drift\ are of smaller scale while \imagenet\ is a large scale
dataset. 
Each dataset consists of a large test set (which we later use to construct streams of
examples) and (possibly multiple) training sets. For each dataset, we collect
a collection of pre-trained models by training various models on the training sets.
We provide a detailed explanation on the characteristics of our model
collection in \pref{app:dataset-explained}.

For \cifar, we trained 80 classifiers varying in model, architecture, and
parameter settings available on Pytorch Hub\footnote{\url{https://pytorch.org/hub/}}.  The ensemble contains models having accuracies between
55-92\% on a test set consisting of 10\,000 \cifar\ images. The
\imagenet\ dataset poses a 1\,000-class classification problem. We collected 102
image classifiers that are available on TensorFlow Hub\footnote{\url{https://tfhub.dev/}}. The
accuracy of these models is in the range 50-82\%. For the test set, we use the
whole official test set with 50\,000 images.  For the \emo\ dataset, we
collected 8 pre-trained models that are the development history of a participant
in SemEval 2019. The accuracy of the models varies in 88-92\% on a test
set of size 5\,509. Lastly, for the \drift\ dataset, we trained an SVM
classifier on each of 9 batches of gas sensor data that were measured in
different months. We use the last batch as a test set, which is of size 3\,000.
Due to the drift behaviour of sensor data among different time intervals, the
accuracy of the models on the test set is relatively low, and lies in
25-60\%. 

\vspace{-0.6em}
\paragraph{Baselines} \looseness -1 To compare with existing selective sampling
strategies, we implement variations of \qbc, namely, vote entropy
(\textsc{Entropy}) and structural \qbc\ (\textsc{S-QBC}) as well as label
efficient prediction (\textsc{Efficient}) and importance weighted active
learning (\textsc{Importance}), as described below. Typically, these methods
follow a coin flipping strategy: upon seeing an instance $x_t$, a coin is
flipped with a bias $\query_t$, and the label $\label_t$ is requested if and only if the
coin comes up heads.

\emph{Label Efficient Prediction/Passive Learning}. We implement
\citep{cesa2005minimizing} by querying the label of each round randomly with a
fixed probability $\query_t=\varepsilon$. For a fair comparison, we restrict our
interest merely to the data instances in which at least two models disagree, as
others are non-informative in the ranking of models. In our evaluation, we set
the query probability to $\epsilon = b/T$ for having an expected number of $b$
queries in a stream of size $T$. Note that our way of setting $\epsilon$ depends
on the whole stream for having comparable results in terms of the number of
queries, as we shall drop the non-informative samples first.

\emph{QBC/Vote Entropy}. We use the method of ~\citet{Dagan1995CommitteBasedSamp}
and adapt it to the streaming setting as a disagreement-based selective sampling
baseline.  Upon seeing each instance, we measure the disagreement between the
model predictions to compute the query probability. In our implementation,
we consider every pre-trained model as a committee member and use vote
entropy as the disagreement measure.

\emph{Structural QBC}. The (interactive) structural \qbc\
algorithm~\citep{Tosh2018Sqbc} is built upon the \qbc\ principle,
and its query probability is specified via the disagreement between competing
models that are drawn from a posterior distribution $\boldsymbol{\rho}_t$.
After each new query, the posterior is updated as $\boldsymbol{\rho}_{t+1} \propto
\boldsymbol{\rho}_{t}\exp(-\beta {\loss}_t)$, where $\beta$ is a fixed parameter.
In our implementation, at each round $t$, we draw two models $i$ and $j$
from $\boldsymbol{\rho}_t$ with replacement and set the query probability to be the fraction of
disagreement between $i$ and $j$ up to round $t$, that is, $\query_{t} =
\frac{1}{t}\sum_{s\leq t} \indic{\pred_{s,i}\neq \pred_{s,j}}$.

\emph{Importance Weighted Active Learning}.
We implemented the importance weighted active learning algorithm introduced
by~\citet{Beygelzimer2008ImpWeightedAL}, as well as its variant for efficient
active learning~\citep{Beygelzimer2010Agnostic, Beygelzimer2011EfficientAL}.
Among these two adaptations of importance weights, we only focus on the
superior~\citep{Beygelzimer2008ImpWeightedAL} in our empirical evaluation and
leave the others to \pref{app:baselines}.

It is crucial to note that none of the methods above are tailored for the task
of \emph{ranking pre-trained models} and (except for \citet{cesa2005minimizing}) for
\emph{sequential label prediction}. Yet we consider them as selective sampling
baselines; see \pref{app:baselines} for further discussions about our baselines.

\begin{figure*}[t!]
    \centering
      \includegraphics[width=1\linewidth]{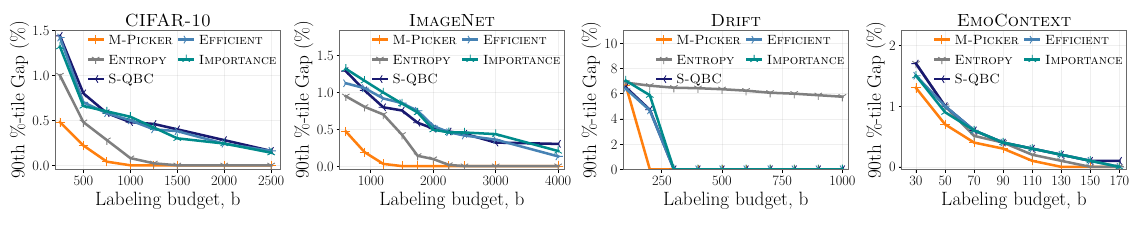}
        \caption{Worst-case analysis on the outputted models: 90th percentile accuracy gap}
        \label{fig:worst-case gap}
        \vspace{-1em}
\end{figure*} 

\subsection{Experimental Setup}
\paragraph{Evaluation Protocol and Tuning} \looseness -1 For a fair comparison,
we focus on the following protocol. We sequentially draw $T$ \iid\ instances
uniformly at random from the entire pool of test instances, then input it into
each algorithm as a stream, and call it a realization. In each realization, the
pre-trained model with the highest accuracy on that stream (considering all
labels) is denoted as the \emph{true} best model of the realization.

For each realization and up to any round $t$, \mpick\ outputs $\pi_t =
\argmax_i \dist_{t, i}$ as the best model, and other methods output the model
having the highest
accuracy on the queried labels. Upon exhausting the stream, we evaluate
the performance of each method based on the model that is outputted. We realize
this process many times to have an estimate of the expected performance.

For comparing the methods under the same budget constraint, we tune the
(hyper-)parameter of each method to query the same number of instances, and
compare their average performance under various labeling budgets. For Structural
\qbc, we treat $\beta$ (in the posterior) as the
hyperparameter. For \qbc\ with vote entropy, importance weighted active learning and
\mpick, we \emph{introduce} a hyperparameter $\beta$ to scale the query probability
according to the given labeling budget.%
\footnote{It is straightforward to see
that by scaling the value of $\querybef(\pred_t, \dist_t)$ by some constant, one
still gets similar theoretical results. The regret bounds, as well as the bounds
on the confidence and accuracy gap will be scaled accordingly.}
Note that by default, \mpick\ needs no hyperparameters, and we introduce $\beta$ 
for the sole reason of fair comparison with other methods.
We perform hyperparameter selection via a grid search. The hyperparameters used
for each budget, together with a large range of hyperparameters and their
respective budgets can be found in \pref{app:hyperparameters}.

\vspace{-0.6em}
\paragraph{Performance Metrics}
For a given labeling budget, we consider the following key quantities as
performance measures: \emph{Regret} for a fixed labeling budget, \emph{Accuracy
gap} between the outputted model and the true best model, and
\emph{Identification Probability}, which is the fraction of realizations that
methods return the true best model of that realization.
\vspace{-0.6em}

\paragraph{Scaling and Computation Cost}
We conduct our experiments on different stream sizes. We choose sizes of 5\,000,
10\,000, 1\,000 and 2\,500 for \cifar, \imagenet, \emo\ and \drift\ test sets,
respectively.  We implement \mpick, along with all other baseline methods in
Python. All the baseline methods combined, each realization takes between 1
second (for \emo) and 4 minutes (for \imagenet) when executed on a single CPU
core. \mpick\ alone takes between 75 miliseconds (for \emo) and 47 seconds (for
\imagenet). For all datasets we run 500 independent realizations for each budget
constraint. To improve the overall runtime, we run the realizations in parallel
over a cluster with 400 cores.

\subsection{Experimental Results}
We review our numerical results for each of the metrics introduced earlier.
We refer to \pref{app:experiments_detailed} for an extensive discussion of our
findings.  For each of our metrics, we observe the following:
\vspace{-0.6em}

\paragraph{Regret}
We measure the regret across all rounds and for those budgets where \mpick\
returns the best model with high confidence. Namely, we set the budget to
1\,250, 1\,200, 130 and 1\,000 for the \cifar, \imagenet, \emo\ and \drift\
datasets, respectively.  The regret behaviour is shown in
\pref{fig:combined-plots}(a). In all cases, the regret grows sub-linearly for
all algorithms. The regret of our algorithm in all cases is smaller up to a
factor of 1.3$\times$, which shows that \mpick\ can be used for sequential label
prediction tasks as well as model selection.

\vspace{-0.6em}
\paragraph{Accuracy Gap} Next, we consider the average accuracy gap over the
realizations.  \pref{fig:combined-plots}(b) shows that the accuracy gaps for
\mpick\ are much smaller than that of other adapted methods under the same
budget constraints. Quantitatively, in both \cifar\ and \imagenet\ datasets,
\mpick\ achieves the same expected accuracy gap as \entropy\ by querying nearly
2.5$\times$ less labels. For the \drift\ dataset, for instance, \mpick\ returns
a model that is within a 0.1\%-neighborhood of the accuracy of best model after
querying merely 11\% of the entire stream of examples (when the budget is 270
for a stream of size 2\,500). Note that active learning over drifting data
distribution is a very challenging task, and \efficient\ (Label Efficient Prediction/Passive
Learning) is considered the strongest baseline~\citep{settles2009active}.
Our experiments thus suggest that, even for small labeling budgets, \mpick\
returns a model whose accuracy is close to that of the best model, if not the best model itself.


\vspace{-0.6em}
\paragraph{Identification Probability} As illustrated in
\pref{fig:combined-plots}(c), \mpick\ achieves significant improvements of up to
2.6$\times$ in labeling cost while returning the true best model and requesting
far fewer labels than other adapted methods. For \cifar, \imagenet, \emo\ and
\drift\ datasets, \mpick\ queries 2.5$\times$, 2.5$\times$, 1.2$\times$ and
1.7$\times$ fewer labels respectively than that of the best competing method
(mainly \entropy) to reach confidence levels 95\%, 97\%, 92\% and 97\%,
respectively. This shows that \mpick\ is able to achieve the same identification
power as the adapted baselines at a much lower labeling cost.

\subsubsection{On the Robustness of \mpick}
Practitioners are often interested in the relative quality of the output model
compared to the true best model in a single trial. In this regard, and in the
spirit of Theorems~\ref{thm:iden-adv} and \ref{thm:ident-stoch}, we conduct
further numerical analysis on the accuracy of the outputted models over a large
number of realizations to investigate if \mpick\ performs well \emph{with high
probability}. We compute the 90th percentile of accuracy gap as a proxy for
the behaviour of the algorithms in the high probability regime
(see~\pref{fig:worst-case gap}). In the \drift\ dataset, for instance, \mpick\
returns the true best model after querying merely 8\% of the labels (when the
budget is 200 with a stream size of 2\,500). For the \cifar\ and \imagenet\
datasets, \mpick\ returns a model that is within a 0.1\%-neighborhood of the
accuracy of best model after querying nearly 12\% of the entire stream of
examples whereas the best competing method achieves this after querying 24\% of
the same stream of examples. Moreover, \mpick\ outputs the true best model after
querying 15\% and 20\% of the entire stream of examples, respectively. These
results clearly demonstrate the robustness of \mpick.


\section{CONCLUSIONS}
\looseness -1 We introduced an online active model selection approach --\mpick--
to selectively query the labels of instances that are informative for ranking
pre-trained models and to sequentially predict unseen labels. Our framework is generic,
easy to implement, and applies across various classification tasks.  We derived
theoretical guarantees and illustrate the effectiveness of our method on several
real-world datasets.

\subsection*{Acknowledgements}
We thank the reviewers for their constructive feedback. This research was supported
by the SNSF grant 407540\_167212 through the NRP 75 Big Data program and by the
European Research Council (ERC) under the European Union’s Horizon 2020 research
and innovation programme grant agreement No 815943. CZ and the DS3Lab gratefully
acknowledge the support from the Swiss National Science Foundation (Project
Number 200021\_184628), Innosuisse/SNF BRIDGE Discovery (Project Number
40B2-0\_187132), European Union Horizon 2020 Research and Innovation Programme
(DAPHNE, 957407), Botnar Research Centre for Child Health, Swiss Data Science
Center, Alibaba, Cisco, eBay, Google Focused Research Awards, Oracle Labs,
Swisscom, Zurich Insurance, Chinese Scholarship Council, and the Department of
Computer Science at ETH Zurich.

\bibliographystyle{plainnat}
\bibliography{online}

\newpage

\onecolumn
\appendix
\aistatstitle{Online Active Model Selection for Pre-trained
Classifiers:\\Supplementary Materials}

\startcontents[sections]
\printcontents[sections]{l}{1}{\setcounter{tocdepth}{2}}
\vfill
\section{Proofs and Supplementary Lemmas}
\label{app:proofs}
\subsection{On the Choice of Query Probability}\label{app:var-close-to-kl}
Here we elaborate on the discussion for \pref{eq:definition-update-probability}. Let $\dist_t$ be the current distribution over experts, and $\dist_{t+1}^{(c)}$ be the hypothetical distribution over the experts having observed the loss if the true label is $c$.

First, according to \cite[Lemma 1]{narayanan2010random}, the divergence $\mathrm{KL}(\dist_t\|\dist_{t+1}^{(c)})$ accumulates into the regret. This means that if we do \emph{not} update $\dist_t$ accordingly, we \emph{miss} this amount of information.

Second, the KL divergence between $\dist_t$ and $\dist_{t+1}^{(c)}$ computes
\[
    \mathrm{KL}(\dist_t\|\dist_{t+1}^{(c)}) = \eta r + \log(r e^{-\eta} + 1- r),
\]
where $r = \inner{\dist_t, \loss_t^{c}}$. By H\"offdings inequality, one can show that this quantity is between 0 and $\eta^2/8$. As the variance is between 0 and $1/4$, it makes sense to scale the KL divergence to the range $[0, 1/4]$, by multiplying it by $2/\eta^2$. The following lemma completes the comparison promised in \pref{sec:the-algorithm}. We drop the subscript $t$ for readability.

\begin{lemma}
For a distribution over the experts $\dist$ and a fixed $\loss \in \crl{0,1}^k$,
define $\dist_+ \propto \dist\,e^{-\eta \loss}$. Then
$$
\abs*{\mathop{\mathrm{Var}}_{A \sim \dist} \loss_A  - \frac{2}{\eta^2}\mathrm{KL}(\dist\|\dist_+)} < \frac{1}{18\sqrt{3}}\eta + O(\eta^2).
$$
\end{lemma}
\begin{proof}
    Define $r = \inner{\dist, \loss}$. We have that
\[
    \mathrm{KL}(\dist\|\dist_+) =  \eta r + \log(r e^{-\eta} + 1- r) = \log\ex{e^{-\eta(X - \ex{X})}},
\]
where $X$ is a Bernoulli random variable with $\ex{X} = r$. Note that the equation above is the cumulant generating function of $X$ and has the Taylor series
\[
\log\ex{e^{-\eta(X - \ex{X})}} = \frac{\eta^2}{2} \mathop{\mathrm{Var}}(X) + \frac{\eta^3}{6}\kappa_3 + O(\eta^4),
\]
where $\kappa_3$ is the third cumulant. Note that by the relation between cumulants of a Bernoulli random variable, we have
\[
\kappa_3 = r(1-r) \frac{d}{dr} \kappa_2 = r(1-r)(1-2r).
\]
Easy algebra finds that for $r\in [0,1]$, we have $\kappa_3 \in [-1/6\sqrt{3}, 1/6\sqrt{3}]$.

Summing all up, we find
\[
\frac{2}{\eta^2} \mathrm{KL}(\dist\|\dist_+) = \mathop{\mathrm{Var}}(X) + \frac{\kappa_3}{3}\eta + O(\eta^2),
\]
and the result of the lemma follows.
\end{proof}

\subsection{Mix Loss Properties}
\label{app:proof-mix-loss}
Define $\Lest_{T,*} := \min_{i\in [k]} \Lest_{T,i}$.
\begin{lemma}\label{lem:mixloss}
	The cumulative mix loss $M_T$ is bounded above by $\Lest_{T,*} + \frac{\log k}{\eta_T}$.
\end{lemma}
Before stating the proof, first we bring a standard lemma:
\begin{lemma}[{\citet{derooijFollowLeaderIf2013}}]
	The cumulative mix loss $M_T$ has the following properties for constant learning rates ($\eta_t \equiv \eta$ for all $t\geq 1$):
	\begin{enumerate}[label=(\roman*)]
		\item $M_T = -\frac{1}{\eta}\log\sum_{i\in [k]} e^{-\eta \Lest_{T,i}} + \frac{1}{\eta}\log k$,
        \item $M_T \leq \Lest_{T,*} + \frac{1}{\eta}\log k$.
	\end{enumerate}
	Moreover, for any sequence of decaying learning rates $\crl{\eta_t}_{t\geq 1}$, let $M_T(\crl{\eta_t})$ be the corresponding cumulative mix loss, and set $M_T(\eta_T)$ be the cumulative mix loss for fixed learning rate $\eta_T$. Then, it holds that $M_T(\crl{\eta_t}) \leq M_T(\eta_T)$.
\end{lemma}
\begin{proof}
	Define $W_t = \sum_{i\in [k]}e^{-\eta\Lest_{t,i}}$. For part (i) observe that
	\[
		\inner{\dist_t, e^{-\eta\ellest_t}} = \sum_{i \in [k]}
        \frac{e^{-\eta\Lest_{t-1, i}}}{W_{t-1}} e^{-\eta\ellest_{t,i}} = \frac{W_t}{W_{t-1}}.
	\]
	Hence, $\log \inner{\dist_t, e^{-\eta\ellest_t}} =-\eta m_t = \log W_t - \log W_{t-1}$, and $M_T = \frac{1}{\eta}(\log W_0 - \log W_T)$. Observing that $W_0 = k$ gives (i).

    Noticing that $W_T \geq e^{-\eta \Lest_{T,*}}$ easily implies (ii).

	For the last part of the lemma, first we prove that $M_T$ for constant learning rate $\eta$ is nonincreasing in $\eta$. This is shown by looking at the derivative of $M_T$ with respect to $\eta$ which is equal to
	\[
		\tfrac{1}{\eta^2}\log\sum_{i\in [k]} e^{-\eta \Lest_{T,i}} - \frac{1}{\eta}\frac{\sum \Lest_{T,i}e^{-\eta \Lest_{T,i}}}{\sum e^{-\eta \Lest_{T,i}}} -\tfrac{1}{\eta^2}\log k \leq 0,
	\]
	as $\log\sum_{i\in [k]} e^{-\eta \Lest_{T,i}} \leq \log k$.

	Now we can prove the last part of the lemma.
	\[
		\sum_{t=1}^T m_t(\crl{\eta_t}) = \sum_{t=1}^T M_t(\eta_t) - M_{t-1}(\eta_{t}) \leq \sum_{t=1}^T M_t(\eta_t) - M_{t-1}(\eta_{t-1}) \leq M_T(\eta_T). \qedhere
	\]
\end{proof}
\begin{proof}[Proof of \pref{lem:mixloss}]
By the lemma above, we see that
$
M_T \leq M_T(\eta_T) \leq \Lest_{T,*} + \frac{\log k}{\eta_T},
$
where we lower bounded the sum over the models by the one that corresponds to
$\Lest_{T,*}$.
\end{proof}
\subsection{Lemmas for Regret Bound}\label{app:proof-regret-inequality}
\begin{lemma}\label{lem:mixinggap}
    For any $\eta > 0$ it holds that
	\[
		\ex\brk*{\log\inner{\dist_t, e^{-\eta\ellest_t}} + \eta\inner{\dist_t, \ellest_t}} \leq \eta^2.
	\]
\end{lemma}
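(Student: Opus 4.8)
The plan is to condition on the history so that $\dist_t$, $\pred_t$, the query probability $\query_t$, and the (adversarially fixed) loss vector $\ell_t$ are all determined, leaving $\queryind_t \sim \mathrm{Ber}(\query_t)$ as the only source of randomness. Writing $a \ldef \inner{\dist_t, \ell_t}$ and recalling that $\ell_t$ is binary and $\ellest_t = \ell_t \queryind_t / \query_t$, the bracketed expression vanishes when $\queryind_t = 0$ and equals $\log((1-a) + a e^{-\eta/\query_t}) + \eta a/\query_t$ when $\queryind_t = 1$. Taking the expectation over $\queryind_t$ therefore collapses the statement to the single-variable inequality
\[
  F \ldef \query_t \log\!\prn*{(1-a) + a e^{-\eta/\query_t}} + \eta a \;\leq\; \eta^2 .
\]

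First I would recognize the bracketed term as a centered log-MGF: setting $\beta \ldef \eta/\query_t$ and $\psi(\beta) \ldef \log((1-a) + a e^{-\beta}) + \beta a = \log \ex_{I \sim \dist_t} e^{-\beta(\ell_{t,I} - a)}$, we have $F = \query_t \psi(\beta)$ with $\psi(0) = \psi'(0) = 0$ and $\psi''(\beta) = a(1-a) e^{-\beta} / ((1-a) + a e^{-\beta})^2$. Using the elementary bound $(1-a) + a e^{-\beta} \geq e^{-\beta}$ gives $\psi''(\beta) \leq a(1-a) e^{\beta}$, and integrating the Taylor remainder yields
\[
  \psi(\beta) \;\leq\; a(1-a)\int_0^\beta (\beta - s)\, e^{s}\, ds \;=\; a(1-a)\,(e^{\beta} - 1 - \beta).
\]

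The two defining properties of the query probability then finish the argument, and this is where the design of $\querybef$ is essential. Because the true loss $\ell_t$ is one of the hypothetical losses, its variance $a(1-a)$ is dominated by the maximum, so $\query_t \geq \querybef(\pred_t,\dist_t) \geq a(1-a)$; and because of the floor in \pref{eq:definition-update-probability} we have $\query_t \geq \eta$, hence $\beta \leq 1$. On $[0,1]$ the elementary inequality $e^{\beta} - 1 - \beta \leq \beta^2$ holds (from $\beta^2 + \beta + 1 - e^\beta \geq 0$ there), so $\psi(\beta) \leq a(1-a)\beta^2$, and chaining the two bounds gives $F = \query_t \psi(\beta) \leq \query_t \cdot a(1-a)\cdot \beta^2 \leq \query_t^2 \beta^2 = \eta^2$. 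The degenerate case $\querybef = 0$ (all models agree, so the loss is constant and is removed per the earlier footnote) forces $\query_t = 0$, hence $\ellest_t = 0$ and $F = 0$.

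I expect the main obstacle to be obtaining a \emph{variance}-dependent rather than \emph{second-moment}-dependent bound. The naive route---bounding $\inner{\dist_t, \ellest_t^2}$ directly---produces the factor $a/(2\query_t)$, which requires $\query_t \geq a/2$ and fails precisely when $a$ is close to $1$ (almost all of $\dist_t$'s mass on misclassifying models, a near-all-ones loss). Routing the estimate through $\psi''$ and keeping $e^{s}$ inside the integral before bounding---rather than replacing $\psi''$ by its maximum on $[0,1]$, which would only give the loose constant $e/2$---is what produces the clean factor $a(1-a)$ that $\query_t$ was built to dominate.
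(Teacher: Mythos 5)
Your proof is correct, and after an identical reduction it proves the core inequality by a genuinely different route from the paper's. Both arguments condition on the history, note that the bracketed quantity vanishes when $\queryind_t=0$, and reduce the lemma to the scalar inequality $\eta a + \query_t\log\brk*{a\,e^{-\eta/\query_t}+1-a}\leq\eta^2$ with $a=\inner{\dist_t,\ell_t}$; this is exactly \pref{eq:rhsupper}. From there the paper invokes \pref{lem:regretinequality}, a two-case derivative analysis (according to whether $\eta\leq a(1-a)$) that relies on the auxiliary bound of \pref{lem:expbound}, together with a separate monotonicity-in-$u$ clause, which is needed because $\query_t$, being a maximum over all classes, can strictly exceed $a(1-a)\vee\eta$. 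You instead read the scalar expression as $\query_t\,\psi(\beta)$ with $\beta=\eta/\query_t$ and $\psi$ the centered cumulant generating function of the Bernoulli loss under $\dist_t$, bound $\psi''(\beta)\leq a(1-a)e^{\beta}$, integrate the Taylor remainder to get $\psi(\beta)\leq a(1-a)(e^{\beta}-1-\beta)\leq a(1-a)\beta^2$ for $\beta\in[0,1]$, and chain with $a(1-a)\leq\querybef(\pred_t,\dist_t)\leq\query_t$ and $\query_t\beta=\eta$. This buys two things: the variance factor $a(1-a)$ emerges structurally rather than from ad hoc calculus (mirroring the design intuition of \pref{app:var-close-to-kl}), and the paper's monotonicity clause is absorbed for free, since your chaining uses only the one-sided bound $a(1-a)\leq\query_t$ rather than the exact form of $\query_t$; your observation that keeping $e^{s}$ inside the remainder integral (instead of bounding $\psi''$ by its maximum, which would give the constant $e/2>1$) is what yields the exact constant $1$ is also on point. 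What the paper's route buys in exchange is a fully self-contained elementary-calculus lemma. One shared caveat: despite the statement's ``for any $\eta>0$'', both proofs use $\query_t\geq\eta$ (you need it for $\beta\leq1$, the paper needs it inside \pref{lem:regretinequality}), so both implicitly identify the lemma's $\eta$ with the floor $\eta_t$ in \pref{eq:definition-update-probability}; your handling of the degenerate case $\querybef=0$ matches the paper's.
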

\begin{proof}
	If the predictions are all the same, there is nothing to prove, as $\ellest_t \equiv 0$ and the expectation vanishes.

	Let $\dist_c = \sum_{i:\pred_t(i)=c} \dist_{t,i}$ and set $c^*$ to be the true label of
	this round. We can then rewrite $\query_t$ as $\query_t = \max_{c\in \Cc} \dist_c(1-\dist_c) \vee \eta$. Observe that
    \begin{align*}
        \log\inner{\dist_t, e^{-\eta\ellest_t}} = \log\brk*{ (1 - \dist_{c^*})\exp\crl*{-\tfrac{\eta}{\query_t}\queryind_t} + \dist_{c^*}}.
    \end{align*}
	It is clear that $\ex{\inner{\dist_t, \ellest_t}} = \inner{\dist_t, \ell_t} = 1 - \dist_{c^*}$. Hence, the expected value in the lemma is equal to
    \begin{equation}\label{eq:rhsupper}
		\eta (1 - \dist_{c^*}) + \query_t\cdot\log\brk*{ (1 - \dist_{c^*})\exp\crl*{-\eta/\query_t} + \dist_{c^*}}.
    \end{equation}
	Our desired result follows from \pref{lem:regretinequality} by setting $x = 1 - \dist_{c^*}$ and noticing that $\query_t \geq \dist_{c^*}(1 - \dist_{c^*}) \vee \eta$.
\end{proof}

\begin{lemma}\label{lem:regretinequality}
    For all $x\in (0,1)$ and all $\eta \in (0,1]$, defining $u = x(1-x) \vee \eta$, one has
    \[
        f(x,u,\eta) := \eta x + u\log\brk*{x\,e^{-\frac{\eta}{u}}+1-x} \leq \eta^2.
    \]
	Moreover, for fixed $x$ and $\eta$, $f(x,u,\eta)$ is decreasing in $u$ for $u\geq \eta$.
\end{lemma}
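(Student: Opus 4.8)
The plan is to collapse the whole statement to a single one-variable inequality. Writing $a = \eta/u$ (so $a\in(0,1]$, because $u\ge\eta$ always holds for $u = x(1-x)\vee\eta$), one observes the clean factorization
\[
 f(x,u,\eta) = u\,g(a), \qquad g(a) := ax + \log\prn*{xe^{-a}+1-x}.
\]
I claim the bound $f\le\eta^2$ follows from the single \emph{core inequality}
\[
 g(a) \le x(1-x)\,a^2 \qquad\text{for all } x\in(0,1),\ a\in(0,1].
\]
Indeed, granting this and using $x(1-x)\le u$ (since $u=x(1-x)\vee\eta$) together with $ua=\eta$, one gets $f = u\,g(a) \le u\cdot x(1-x)\,a^2 \le u\cdot u\,a^2 = (ua)^2 = \eta^2$. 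So the only real work is the core inequality, and notably I would \emph{not} route the bound through the monotonicity, since replacing $u$ by $\eta$ there is too lossy when $u=x(1-x)\gg\eta$.

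For the core inequality I would fix $a$ and study $\Phi(x):=x(1-x)a^2 - g(a)$ on $x\in[0,1]$. The key structural facts are that $\Phi(0)=\Phi(1)=0$ (the quadratic is exactly cancelled by $g$ at both endpoints), and that after two $x$-derivatives $\Phi''(x) = -2a^2 + \frac{(1-e^{-a})^2}{(1-x(1-e^{-a}))^2}$, which is strictly increasing in $x$. Hence $\Phi''$ changes sign at most once, from negative to positive, so $\Phi$ is either concave throughout or concave-then-convex. A short check shows the endpoint slopes satisfy $\Phi'(0)=a^2-a+1-e^{-a}>0$ and $\Phi'(1)=e^a-1-a-a^2<0$ for $a\in(0,1]$ (each is an elementary one-variable inequality: both quantities vanish at $a=0$ and the sign follows from checking one further derivative). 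Combining the sign pattern of $\Phi''$ with these endpoint slopes forces $\Phi'$ to have a single zero, so $\Phi$ is unimodal with an interior maximum; vanishing at both endpoints, it is therefore $\ge0$ on $[0,1]$, which is exactly the core inequality.

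I expect this core inequality to be the main obstacle. Naive second-order Taylor bounds on $\log(1-x(1-e^{-a}))$ are too lossy precisely when $x$ and $a$ are both close to $1$: there $g(a)$ and $x(1-x)a^2$ become comparable and a termwise estimate loses a constant factor, so one genuinely needs the global ``vanishing endpoints $+$ single inflection'' argument rather than a pointwise expansion.

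Finally, for the monotonicity claim I would differentiate $f$ in $u$ directly. With $a=\eta/u$ one finds $\partial_u f = \psi(a)$, where $\psi(a) := \log(xe^{-a}+1-x) + \frac{a\,xe^{-a}}{xe^{-a}+1-x}$. Here $\psi(0)=0$, and a direct computation telescopes to $\psi'(a) = -a\,\frac{xe^{-a}(1-x)}{(xe^{-a}+1-x)^2}\le 0$ for $a\ge0$; hence $\psi\le0$, i.e.\ $\partial_u f\le0$, so $f$ is nonincreasing in $u$ on $u\ge\eta$ (in fact for all $u>0$). This step is routine compared to the core inequality.
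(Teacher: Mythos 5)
Your proof is correct, and it takes a genuinely different route from the paper's. The paper's proof splits into two cases according to which term attains the maximum in $u=x(1-x)\vee\eta$: when $u=x(1-x)$ it bounds the derivative of $f$ in $\eta$ by $2\eta$ using the elementary bounds $1+y+y^2/2\le e^y\le 1+y+(e-2)y^2$ on $(0,1]$, and when $u=\eta$ it reduces to the separate scalar inequality $1-(1-e^{-1})x\le e^{-x^2}$ (\pref{lem:expbound}), proved by a concave/convex split of $[0,1]$. Your substitution $a=\eta/u$ and factorization $f=u\,g(a)$ replace both cases by the single core inequality $g(a)\le x(1-x)a^2$, and your structural argument for it is sound: with $s=1-e^{-a}$, the function $\Phi(x)=x(1-x)a^2-ax-\log(1-sx)$ vanishes at $x=0,1$; $\Phi''(x)=-2a^2+s^2/(1-sx)^2$ is increasing in $x$, so it changes sign at most once (negative to positive); and the endpoint slopes $\Phi'(0)=a^2-a+1-e^{-a}>0$ and $\Phi'(1)=e^a-1-a-a^2<0$ then force $\Phi'$ to cross zero exactly once, so $\Phi\ge 0$. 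I verified the chain $f=u\,g(a)\le u\,x(1-x)a^2\le (ua)^2=\eta^2$, which uses exactly $x(1-x)\le u$ and $a\le 1$; note this actually proves the stronger statement that $f(x,u,\eta)\le\eta^2$ for \emph{every} $u\ge x(1-x)\vee\eta$, which makes the monotonicity claim redundant for the application in \pref{lem:mixinggap} (where $\query_t$ may strictly exceed $x(1-x)\vee\eta$), whereas the paper needs the monotonicity statement precisely to descend from $\query_t$ to $x(1-x)\vee\eta$. Two small remarks: your parenthetical that the endpoint-slope signs follow from ``one further derivative'' undersells the work slightly --- cleanest is $e^{-a}\le 1-a+a^2/2$, giving $\Phi'(0)\ge a^2/2$, and $e^a\le 1+a+(e-2)a^2$ on $[0,1]$ (the same bound the paper uses), giving $\Phi'(1)\le(e-3)a^2<0$; and your monotonicity argument (compute $\partial_u f=\psi(\eta/u)$ with $\psi(0)=0$ and $\psi'(a)=-a\,xe^{-a}(1-x)/(xe^{-a}+1-x)^2\le 0$) is essentially identical to the paper's proof of that part.
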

\begin{proof}
    Note that the value of the LHS and RHS agree when $\eta = 0$, so we have to prove that for all $\eta \geq 0$, the derivative of the LHS is at most $\eta$. Fix some $x \in [0,1)$. We prove this fact in two cases:

    \underline{Case where $\eta \leq x(1-x)$.} In this case, $u = x(1-x)$. For brevity, define $y := \frac{\eta}{x(1-x)}$. The derivative of $f$ with respect to $\eta$ becomes
\[
    \frac{ (e^y - 1)x(1-x) }{ e^y(1-x) + x },
\]
and we are left with proving
\[
    \frac{ (e^y - 1)}{ e^y(1-x) + x } \leq 2y
\]
    As $0 < y \leq 1$, we have that $1 + y + y^2/2 \leq e^y \leq 1 + y + (e-2)y^2$. Replacing these bounds in the equation above leaves us with proving that
    \[
        \frac{1 + (e-2)y}{1 + (y + y^2/2)(1-x)} \leq 2.
    \]
    For a fixed $y$, the left hand side is increasing in $x$, and hence, it is enough to prove that
    \[
        1 + (e-2)y \leq 2,
    \]
    but this is true as $y \leq 1$ and $e-1 < 2$. Thus, we are done with the proof of this case.

    \underline{Case where $\eta > x(1-x)$.} In this case, $u = \eta$, and $f(x,\eta) = \eta x + \eta \log \brk*{ x e^{-1} + 1-x }$. To prove the claim, we have to show that $x + \log \brk*{ x e^{-1} + 1-x } \leq \eta$, or, as the left hand side does not depend on $\eta$, we shall prove
    \[
        x + \log \brk*{ x e^{-1} + 1-x } \leq x(1-x),
    \]
    or, equivalently,
    \[
        1 - (1-e^{-1})x \leq e^{-x^2},
    \]
	which is proven in \pref{lem:expbound}. Thus, in both cases, we have proved our inequality and we are done with the proof of the first part of lemma.

	We now prove the monotonicity of $f$ with respect to $u$. For that, we show the derivative of $f$ with respect to $u$ is nonpositive. The derivative computes
	\[
		f'(u) = \log\brk*{xe^{-\eta/u} + 1 -x} + \frac{xe^{-\eta/u}\eta/u}{xe^{-\eta/u} + 1 -x}.
	\]
	Define $a= \eta/u$. The equation above is zero for $a= 0$. So it suffices to show that the derivative of above is nonpositive for $ 0\leq a \leq 1$. Computing the derivative w.r.t. $a$ and setting it less than 0 is equivalent to
	\[
		xe^{-a} + 1 - x - xe^{-a} \geq 0,
	\]
	which is true. Hence, we are done.
\end{proof}

\begin{lemma}\label{lem:expbound}
    For all $x \in [0, 1]$ one has $1 - (1-e^{-1})x \leq \exp\crl{-x^2}$.
\end{lemma}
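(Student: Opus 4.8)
The plan is to set $c = 1 - e^{-1}$ and study the gap function $g(x) = e^{-x^2} - (1 - cx)$ on $[0,1]$, with the goal of showing $g \geq 0$ throughout. The first thing I would record is that both endpoints are roots: $g(0) = 1 - (1-0) = 0$, and since $1 - c = e^{-1}$ we also get $g(1) = e^{-1} - (1-c) = e^{-1} - e^{-1} = 0$. Thus the claimed inequality is tight at $0$ and $1$, and the entire content of the lemma is that $g$ does not dip below zero in between.

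My strategy would be to prove that $g$ is \emph{unimodal} on $[0,1]$, i.e.\ increasing and then decreasing. Combined with $g(0) = g(1) = 0$ this forces $g \geq 0$: on the increasing stretch $g \geq g(0) = 0$, and on the decreasing stretch $g \geq g(1) = 0$. Concretely I would differentiate to get $g'(x) = c - 2x e^{-x^2}$, and then analyze $h(x) := 2x e^{-x^2}$, so that the sign of $g'$ is exactly the sign of $c - h$. A one-line computation gives $h'(x) = 2e^{-x^2}(1 - 2x^2)$, showing that $h$ increases on $[0, 1/\sqrt{2}]$ and decreases on $[1/\sqrt{2}, 1]$, with maximum $h(1/\sqrt{2}) = \sqrt{2}\,e^{-1/2}$.

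The crux is then to show that $c - h$ changes sign exactly once on $(0,1)$, passing from $+$ to $-$. For this I would invoke three constant comparisons: $h(0) = 0 < c$, the maximum $\sqrt{2}\,e^{-1/2} \approx 0.858 > c \approx 0.632$, and the right endpoint $h(1) = 2e^{-1} \approx 0.736 > c$. Since $h$ strictly increases up to $1/\sqrt{2}$, it crosses the level $c$ exactly once there (from below); since $h$ stays above $c$ on all of $[1/\sqrt{2}, 1]$ (on that branch it only decreases, from $0.858$ down to $0.736$, both exceeding $c$), there is no second crossing. Hence $g' > 0$ before this single crossing point and $g' < 0$ after it, which is precisely the desired unimodality.

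I expect the only delicate point to be verifying that $h$ does not fall back below $c$ on its decreasing branch $[1/\sqrt{2},1]$; by monotonicity there this reduces to the single endpoint inequality $h(1) = 2e^{-1} > 1 - e^{-1}$, i.e.\ $3 > e$. So the whole obstacle collapses to a clean numerical comparison rather than any further estimate, and everything else is routine calculus.
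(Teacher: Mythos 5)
Your proof is correct, but it takes a genuinely different route from the paper's. The paper splits $[0,1]$ at the inflection point $1/\sqrt{2}$ of $e^{-x^2}$ and argues by convexity on each half: on $[0,1/\sqrt{2}]$ the function is concave and agrees with the line at $x=0$, so (since a concave function that dominates a line at both ends of an interval dominates it throughout) it suffices to check the inequality at the single point $x=1/\sqrt{2}$, which the paper does via the factorization $e^{-1/2}-1+(1-1/e)/\sqrt{2}=(1-1/\sqrt{e})\bigl(\tfrac{1}{\sqrt{2}}(1+\tfrac{1}{\sqrt{e}})-1\bigr)\geq 0$; on $[1/\sqrt{2},1]$ the function is convex and agrees with the line at $x=1$, so it suffices to compare slopes at $x=1$, namely $-2/e\leq -(1-1/e)$, i.e.\ $3>e$. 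You instead work with the gap $g(x)=e^{-x^2}-(1-cx)$ globally and prove unimodality from the sign pattern of $g'=c-h$ with $h(x)=2xe^{-x^2}$, which together with $g(0)=g(1)=0$ forces $g\geq 0$. The two arguments are close cousins: your $h$ is exactly $-\frac{d}{dx}e^{-x^2}$, its single peak at $1/\sqrt{2}$ encodes the same concave-then-convex structure the paper exploits, and both proofs bottom out at the same numerical fact $3>e$. What your version buys is that it dispenses with the paper's separate value check at $x=1/\sqrt{2}$: once $h$ is known to be decreasing on $[1/\sqrt{2},1]$, everything reduces to the endpoint comparison $h(1)=2/e>1-1/e$ (and your comparison of the maximum, $\sqrt{2}e^{-1/2}>c$, then follows for free from monotonicity, so even that numeric is dispensable). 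What the paper's version buys is that it never has to reason about the location or uniqueness of a sign change of $g'$; each half-interval is killed by a one-line chord or tangent argument.
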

\begin{proof}
    Note that $\exp(-x^2)$ is concave on $[0, \sqrt{1/2}]$ and convex on $[\sqrt{1/2}, 1]$. Also at $x=0$ and $x=1$, both sides are equal.
    Hence, we just have to show that at $x = \sqrt{1/2}$, the right hand side is bigger than the left hand side, which automatically shows the inequality for $x \in [0, \sqrt{1/2}]$, and we have to show that the derivative of the right hand side is smaller than the left hand side at $x = 1$, which automatically shows the inequality for the other half of the interval, as $\exp(-x^2)$ is convex there. For the first part, evaluate
    \[
        \exp(-1/2) - 1 + (1 - 1/e)\sqrt{1/2} = (1 - 1/\sqrt{e})\prn*{\frac{1}{\sqrt{2}}(1 + \frac{1}{\sqrt{e}}) - 1} \geq 0.
    \]
    For the second part, note that $\frac{d}{dx} \exp(-x^2) = -2x \exp(-x^2)$, and at $x=1$ it is equal to
    \[
        -2/e \leq -(1 - 1/e),
    \]
    as $3/e > 1$. Hence, we are done.
\end{proof}

\subsection{Proof of \pref{thm:nquery-adv}}\label{app:proof-thm-nquery-adv}
\begin{proof}
	We assume that at all rounds we have $\query_t >0$, as there is no label request on the rounds that all models predict the same. First observe that
	\[
		\ex{\sum_{t=1}^T \queryind_t} \leq \ex\crl*{\sum_{t=1}^T \eta_t + \textstyle\sum_{c\in C}\dist_{t,c}(1-\dist_{t,c})},
	\]
    as maximum of positive numbers is less than their sum. Next, at round $t$
    suppose that the true label is $c_t$. As $x(1-x)$ is concave and
    $\sum_{c\neq c_t} \dist_{t,c} = 1 -
    \dist_{t,c_t}=\inner{\dist_t,\loss_t}=:r_t$, using Jensen's inequality we
    have
	\begin{align*}
		&\sum_{c\in C}\dist_{t,c}(1-\dist_{t,c}) \\
		&\hspace{1cm}= \dist_{t,c_t}(1-\dist_{t,c_t}) + \sum_{c\neq c_t}\dist_{t,c}(1-\dist_{t,c})\\
		&\hspace{1cm}\leq \dist_{t,c_t}(1-\dist_{t,c_t}) + (1-\dist_{t,c_t})\prn*{1 - \tfrac{1-\dist_{t,c_t}}{|C|-1}} \\
		&\hspace{1cm}=r_t\prn*{2 - \tfrac{|C|}{|C|-1}r_t}.
	\end{align*}
	Using Jensen now for the concave function $x(2-\tfrac{|C|}{|C|-1}x)$, we get
	\[
		\sum_{t=1}^T\sum_{c\in C} \dist_{t,c}(1-\dist_{t,c}) \leq T\cdot\prn*{\tfrac{\sum r_t}{T}}\prn*{2- \tfrac{|C|}{|C|-1}\tfrac{\sum r_t}{T}}.
	\]
	Now observe that if the expected total loss of the best model is $L^*$, by our regret bound in \pref{thm:main-regret} we have
	\[
		\ex{\textstyle\sum r_t} \leq 2\sqrt{2T\log k} + L^*.
	\]
	Also note that for $x \leq \frac{|C|-1}{|C|}$, the function $x(2-\tfrac{|C|}{|C|-1}x)$ is increasing. Hence, for large enough $T$ (as described in the theorem), $\frac{1}{T}\sum r_t \leq \frac{|C|-1}{|C|}$, and we have
	\[
		\ex{\sum_{t=1}^T\sum_{c\in C} \dist_{t,c}(1-\dist_{t,c})} \leq\prn*{2\sqrt{2T\log k} + L^*}\cdot\prn*{2- \tfrac{|C|}{|C|-1}\prn*{2\sqrt{2\log k/T}+L^*/T}}.
	\]
	Noting that $\sum\eta_t \leq \sqrt{2T\log k}$, one obtains the result.
\end{proof}

\subsection{Proof of \pref{thm:ident-stoch}}\label{app:proof-confidece}
\begin{proof}
First, we remind the following martingale inequality, which is an improved version of McDiarmid's:
\begin{lemma}[{\citet{seldinImprovedParametrizationAnalysis2017}}]
	Let $\xi_1, \ldots, \xi_T$ be a martingale difference sequence with respect to the filteration $\crl{\Fc_t}_{t\leq T}$, where each $\xi_t$ is integrable and bounded. Let $M_t := \sum_{s\leq t} \xi_s$ be the associated martingale. Define $\nu_T = \sum_{t\leq T} \ex\crl{\xi_t^2\mid \Fc_{t-1}}$ and $c_T = \max_{t\leq T} \xi_t$. Then for any $\beta, \nu, c > 0$,
	\[
		\Pr\crl*{\prn*{M_T \geq \sqrt{2\nu\beta T} + \frac{1}{3}c\beta T} \wedge (\nu_T \leq \nu) \wedge (c_T \leq c)} \leq e^{-\beta T}.
	\]
\end{lemma}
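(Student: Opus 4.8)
The plan is to run the exponential-moment (Chernoff) method for martingales, collapsing the three-way intersection into a single supermartingale tail. Write $\phi(u)=e^{u}-1-u$ and recall that $u\mapsto\phi(u)/u^{2}$ is nondecreasing on $\Rbb$ (with value $\tfrac12$ at $0$). Fix $\lambda>0$. The one-step engine is: for any increment $\zeta$ with $\zeta\le c$ and $\ex\crl{\zeta\mid\Fc_{t-1}}\le 0$, applying $e^{u}\le 1+u+\tfrac{\phi(\lambda c)}{(\lambda c)^{2}}u^{2}$ at $u=\lambda\zeta\le\lambda c$ and taking conditional expectations gives
\[
\ex\crl*{e^{\lambda\zeta}\mid\Fc_{t-1}}\le 1+\lambda\,\ex\crl{\zeta\mid\Fc_{t-1}}+g(\lambda)\,\ex\crl{\zeta^{2}\mid\Fc_{t-1}}\le\exp\crl*{g(\lambda)\,\ex\crl{\zeta^{2}\mid\Fc_{t-1}}},\qquad g(\lambda):=\frac{\phi(\lambda c)}{c^{2}},
\]
where the middle term is dropped because it is nonpositive. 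Consequently, for any adapted increment sequence $\crl{\zeta_t}$ with $\zeta_t\le c$ and $\ex\crl{\zeta_t\mid\Fc_{t-1}}\le 0$, writing $S_t=\sum_{s\le t}\zeta_s$ and $V_t=\sum_{s\le t}\ex\crl{\zeta_s^{2}\mid\Fc_{s-1}}$, the process $\exp\crl{\lambda S_t-g(\lambda)V_t}$ is a supermartingale of expectation at most $1$, so Markov's inequality yields, for every $\lambda>0$,
\[
\Pr\crl*{\prn*{S_T\ge a}\wedge\prn*{V_T\le\nu}}\le e^{-\lambda a+g(\lambda)\nu},
\]
since on $\crl{V_T\le\nu}$ we have $\lambda S_T-g(\lambda)V_T\ge\lambda a-g(\lambda)\nu$.

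The subtlety — and the step I would treat most carefully — is that the one-step bound needs the deterministic control $\zeta\le c$, whereas the hypothesis supplies only the random event $\crl{c_T\le c}$ with $c_T$ not $\Fc_{t-1}$-measurable. I would fix this by one-sided truncation: set $\zeta_t=\tilde\xi_t:=\xi_t\,\indic{\xi_t\le c}$, so $\tilde\xi_t\le c$ always and $\tilde\xi_t=\xi_t$ for all $t\le T$ on $\crl{c_T\le c}$. Since truncation discards only large positive values, $\ex\crl{\tilde\xi_t\mid\Fc_{t-1}}\le 0$, so $\tilde\xi_t$ is exactly the kind of increment the one-step bound accepts; crucially, no re-centering is needed, which matters because re-centering a one-sidedly truncated variable could push its increment back above $c$. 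From $\ex\crl{\tilde\xi_t^{2}\mid\Fc_{t-1}}\le\ex\crl{\xi_t^{2}\mid\Fc_{t-1}}$ we get $V_T\le\nu_T$, and on $\crl{c_T\le c}$ we have $S_T=M_T$. Hence the event of the lemma is contained in $\crl{S_T\ge a}\wedge\crl{V_T\le\nu}$, to which the tail bound of the first paragraph applies verbatim.

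It remains to pick $\lambda$ so that the exponent equals $-\beta T$ at $a=\sqrt{2\nu\beta T}+\tfrac13 c\beta T$. Using the elementary bound $\phi(u)\le\tfrac{u^{2}/2}{1-u/3}$ for $0\le u<3$ gives $g(\lambda)\le\tfrac{\lambda^{2}/2}{1-\lambda c/3}$ on $\lambda\in(0,3/c)$. I would then set
\[
\lambda=\prn*{\tfrac{c}{3}+\sqrt{\tfrac{\nu}{2\beta T}}}^{-1}\in\prn*{0,\tfrac{3}{c}},
\]
for which a direct computation yields $1-\tfrac{\lambda c}{3}=\lambda\sqrt{\nu/(2\beta T)}$, hence $\tfrac{\lambda\nu/2}{1-\lambda c/3}=\tfrac12\sqrt{2\nu\beta T}$ and $\tfrac{\beta T}{\lambda}=\tfrac13 c\beta T+\tfrac12\sqrt{2\nu\beta T}$. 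Adding these two identities shows $\lambda a=\beta T+\tfrac{\lambda^{2}\nu/2}{1-\lambda c/3}$, so that $-\lambda a+g(\lambda)\nu\le -\beta T$, giving exactly the claimed probability $e^{-\beta T}$. The only remaining work is the two scalar inequalities ($\phi(u)/u^{2}$ nondecreasing and $\phi(u)\le\tfrac{u^{2}/2}{1-u/3}$), both standard.
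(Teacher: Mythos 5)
Your proof is correct, but note that there is no in-paper argument to compare it against: the paper states this lemma as an imported black box, cited from \citep{seldinImprovedParametrizationAnalysis2017}, and uses it directly inside the proofs of \pref{thm:ident-stoch} and \pref{thm:iden-adv}. What you have done is reconstruct, from scratch, the standard proof of Freedman's inequality in its Bernstein parametrization (substituting $x=\beta T$ recovers the classical form), which is essentially the argument living in the cited source. Your execution is sound at all the points that matter. The one-step bound is valid for every $u\le\lambda c$, including negative $u$, precisely because $\phi(u)/u^2\le\tfrac12\le\phi(\lambda c)/(\lambda c)^2$ there; the process $\exp\crl{\lambda S_t-g(\lambda)V_t}$ is a genuine supermartingale because $V_t$ is predictable ($\ex\crl{\zeta_t^2\mid\Fc_{t-1}}$ is $\Fc_{t-1}$-measurable), and boundedness of each $\xi_t$ keeps all exponential moments finite. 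The genuinely delicate point — that the event $\crl{c_T\le c}$ is random and $c_T$ is not predictable, so the mgf bound cannot be applied to $\xi_t$ directly — is handled correctly by your one-sided truncation $\tilde\xi_t=\xi_t\indic{\xi_t\le c}$: since $\tilde\xi_t\le\xi_t$ pointwise, the conditional mean stays nonpositive without re-centering, $\tilde\xi_t^2\le\xi_t^2$ gives $V_T\le\nu_T$, and $S_T=M_T$ on $\crl{c_T\le c}$, so the lemma's three-way event sits inside $\crl{S_T\ge a}\wedge\crl{V_T\le\nu}$; your observation that the engine should accept supermartingale differences rather than martingale differences is exactly what makes this legal. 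The closing algebra also checks out: with $\lambda^{-1}=c/3+\sqrt{\nu/(2\beta T)}$ one indeed gets $1-\lambda c/3=\lambda\sqrt{\nu/(2\beta T)}$, and your two identities sum to $\lambda a=\beta T+\frac{\lambda^2\nu/2}{1-\lambda c/3}$, so $-\lambda a+g(\lambda)\nu\le-\beta T$ via $\phi(u)\le\frac{u^2/2}{1-u/3}$ (which follows from $k!\ge 2\cdot 3^{k-2}$). In short: a complete, self-contained proof of a result the paper only cites.
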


Remember that the weight of model $i$ at the end of round $t$ is proportional to
$\exp\crl{-\eta_{t+1} \Lest_t}$.
Hence, identifying the best model $i^*$ after round $t$ reduces to the fact that $\Lest_{t,i^*} = \min_{j\in [k]} \Lest_{t,j}$.
The probability of this event not happening can be bounded by a union bound on the models:
\[
	\Pr\crl{\exists j\neq i^* : \Lest_{t,i^*} \geq \Lest_{t,j}} \leq \sum_{j\neq i^*} \Pr\crl{\Lest_{t,i^*} \geq \Lest_{t,j}} = \sum_{j\neq i^*}\Pr\crl{\Dtil_{t,j} \leq 0},
\]
where we define $\Dtil_{t,j} = \Lest_{t,j} - \Lest_{t,i^*}$.
From now on, we focus on a single model $j$ and drop the index $j$ from $\Delta_j$ and $\theta_j$. Set $d_t := \loss_{t,j} - \loss_{t,i^*}$ and define
\[
	\xi_t := \Delta - \frac{d_t\queryind_t}{\query_t}.
\]
Note that $\ex\crl{\xi_t\mid \Fc_{t-1}} = \ex\crl{\ex\crl{\xi_t\mid \loss_t, \Fc_{t-1}}\mid \Fc_{t-1}} = 0$.
Moreover, the following holds:
\[
	\xi_t \leq \Delta + \frac{1}{\query_t} \leq \Delta + \eta_t^{-1},\quad \ex\crl{\xi_t^2\mid \Fc_{t-1}} = \ex\crl*{\tfrac{d_t^2}{\query_t} \mid \Fc_{t-1}} - \Delta^2 \leq \theta\eta_t^{-1} - \Delta^2.
\]
The sum of the conditional variances up to $T$ satisfies
\[
	\sum_{t=1}^T \ex\crl{\xi_t^2\mid \Fc_{t-1}} \leq T\eta_T^{-1}\theta - T\Delta^2 =: \nu
\]
Also, set $c = \Delta + \eta_T^{-1}$.
By lemma above we have
\[
	\Pr\crl*{\Dtil_{T,j}=\sum_{t=1}^T \frac{d_t\queryind_t}{\query_t} \leq T\Delta - \sqrt{2T\nu\beta} - \tfrac{1}{3}c\beta T} \leq e^{-\beta T}.
\]
We will find the largest $\beta$ such that the right hand side of the inequality above becomes positive. As it is a quadratic polynomial in $\sqrt{\beta}$, we should have that
\[
	\sqrt{\beta} \leq \frac{\sqrt{2\nu/T + \frac{4}{3}c\Delta}- \sqrt{2\nu/T}}{\frac{2}{3}c}.
\]
Now we lower bound the right hand side, and write $\gamma := \eta_T^{-1}$ for brevity:
\begin{align*}
	\frac{\sqrt{2\nu/T + \frac{4}{3}c\Delta} - \sqrt{2\nu/T}}{\frac{2}{3}c} &\geq \frac{4\Delta\sqrt{2\nu/T}}{8\nu/T +\frac{4}{3}c\Delta}&&\text{as }\sqrt{x+a}-\sqrt{x}\geq \frac{2a\sqrt{x}}{4x + a}\\
																			&= \frac{3\Delta\sqrt{2\nu/T}}{6\nu/T + c\Delta}\\
																			&= \frac{3\Delta\sqrt{2(\gamma\theta - \Delta^2)}}{6\gamma\theta - 5\Delta^2 + \gamma\Delta}\\
																			&= \frac{3\sqrt{2}}{\sqrt{\gamma}}\frac{\Delta\sqrt{\theta - \Delta^2/\gamma}}{6\theta+\Delta - 5\Delta^2/\gamma}\\
																			&\geq \frac{3}{\sqrt{\gamma}} \frac{\Delta\sqrt{\theta}}{7\theta} && \text{as }\theta\geq \Delta \text{ and } \Delta^2/\gamma \leq \theta/2.
\end{align*}
Hence, we conclude that setting
\[
	\beta = 0.18 \sqrt{\log k} \frac{1}{\sqrt{2T}} \frac{\Delta^2}{\theta}
\]
gives the desired property. The proof follows by plugging in the value of
$\beta$ and taking a union bound over the experts.
\end{proof}

\subsection{Proof of \pref{thm:iden-adv}}
The proof is very much similar to \pref{thm:ident-stoch}. The difference is that
the conditional variance is bounded above by
\[
	\ex\crl{\xi_t^2\mid \Fc_{t-1}} \leq \eta_t^{-1} - \Delta^2,
\]
and the rest of the proof follows by setting $\theta=1$.

\subsection{Proof of \pref{thm:acc-gap-adv}}
The first bound is standard and can be found in
\citep{slivkinsIntroductionMultiArmedBandits2019}. The argument is completed by
noting that the expected accuracy gap will be bounded by
\[
	\frac{1}{T}\ex\brk{\Loss_{T, I_\tau} - \Loss_{T,i^*}} \leq \sqrt{\frac{8\log k}{T}},
\]
and setting the right hand side less than $\epsilon$.

For the second part, we use \pref{thm:iden-adv}. With probability at most $
k\cdot e^{-0.18\Delta^2\sqrt{T\log k}}$ the recommended expert is not the best,
for which its accuracy gap is at most 1, and otherwise, the best expert is
returned, with accuracy gap 0. Combining the two gives the result.

\subsection{Proof of \pref{thm:acc-stoch}}
The proof is very similar to \pref{thm:acc-gap-adv}, with the difference that
here one upper bounds the accuracy gap by $\max_i\Delta_i$ instead of 1.

\subsection{Proof of \pref{thm:regret-stoch}}
First we prove a lemma that help us proving the theorem:
\begin{lemma}
	The expected number of times that the recommendation $\pi_t$
	is not the best model is a constant up to any round and is bounded by
	$
		\frac{62k}{\lambda^2 \log k}.
	$
\end{lemma}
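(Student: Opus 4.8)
The plan is to express the expected number of ``bad'' rounds as a sum of per-round misidentification probabilities and then control that sum using the exponential tail bound already established in \pref{thm:ident-stoch}. The quantity of interest is $\ex\brk*{\sum_{t\geq 1}\indic{\pi_t \neq i^*}}$, and by linearity of expectation this equals $\sum_{t\geq 1}\Pr\crl{\pi_t \neq i^*}$. Thus the whole task reduces to summing the failure probabilities over all $t$, and the resulting bound on the infinite sum uniformly dominates every partial sum, which is exactly the ``up to any round'' statement.

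For the rounds with $t > 2\log k$ I would plug in \pref{thm:ident-stoch}, which gives $\Pr\crl{\pi_t \neq i^*} \leq k\,e^{-0.18\lambda\sqrt{t\log k}}$. Writing $a = 0.18\lambda\sqrt{\log k}$, the tail of the series becomes $\sum_t k\,e^{-a\sqrt{t}}$. Since $t \mapsto e^{-a\sqrt{t}}$ is decreasing, I would bound this sum by the integral $k\int_0^\infty e^{-a\sqrt{t}}\,dt$ and evaluate it via the substitution $s = \sqrt{t}$, which yields $2k\int_0^\infty s\,e^{-as}\,ds = 2k/a^2$. Substituting $a^2 = (0.18)^2\lambda^2\log k$ produces the leading term $\tfrac{2k}{(0.18)^2\lambda^2\log k} \approx \tfrac{61.7\,k}{\lambda^2\log k}$.

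It remains to handle the initial segment $t \leq 2\log k$, where the tail bound of \pref{thm:ident-stoch} does not apply; here I would use the trivial estimate $\Pr\crl{\pi_t \neq i^*}\leq 1$, contributing at most $2\log k$ to the total. Combining the two pieces bounds the expected number of bad rounds by $2\log k + \tfrac{61.7\,k}{\lambda^2\log k}$.

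The main obstacle is the bookkeeping that folds the $2\log k$ boundary term into the stated constant $62$. Since $\theta_j \geq \Delta_j$ (as $\ell_{\cdot,j},\ell_{\cdot,i^*}\in\crl{0,1}$ force $\Delta_j \leq \ex\abs{\ell_{\cdot,j}-\ell_{\cdot,i^*}} \leq \theta_j$), we get $\Delta_j^2/\theta_j \leq \Delta_j \leq 1$ and hence $\lambda \leq 1$, so $\tfrac{k}{\lambda^2\log k}$ dominates $\log k$ in the relevant regime and the slack between $61.7$ and $62$ absorbs the $2\log k$ contribution. The only genuinely delicate points are getting the integral's constant right and checking that the decreasing-summand-to-integral comparison is invoked from the correct starting index; neither is deep, so I expect the proof to be short.
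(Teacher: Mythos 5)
Your proposal follows the same route as the paper's own proof: by linearity of expectation the quantity equals $\sum_{t\geq 1}\Pr\crl{\pi_t \neq i^*}$, one plugs in the tail bound of \pref{thm:ident-stoch}, and one controls the series via the integral approximation $\sum_{t\geq 1} e^{-a\sqrt{t}} \leq \int_0^\infty e^{-a\sqrt{t}}\,dt = 2/a^2$ with $a = 0.18\lambda\sqrt{\log k}$, giving $2k/a^2 \approx 61.73\,k/(\lambda^2\log k) \leq 62\,k/(\lambda^2\log k)$. The paper does exactly this computation, applying the tail bound at every round $t\geq 1$ with no special treatment of early rounds; since every partial sum is dominated by the infinite sum, the ``up to any round'' clause follows, as you note.

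Where you deviate is in splitting off the rounds $t \leq 2\log k$, on which \pref{thm:ident-stoch} is not stated to hold. Noticing this restriction is more careful than the paper itself, but your patch does not close the gap as claimed: folding the additive $2\log k$ into the slack $62 - 2/(0.18)^2 \approx 0.27$ requires $2\lambda^2(\log k)^2 \leq 0.27\,k$, and even granting $\lambda \leq 1$ this inequality fails for every $k$ from $2$ up to roughly $200$. Concretely, for $k=2$ and $\lambda$ close to $1$ (two models, one near-perfect and one near-useless), your combined bound is $2\log 2 + 61.73\cdot 2/\log 2 \approx 179.5$, which exceeds the claimed $62\cdot 2/\log 2 \approx 178.9$; so as written you prove the lemma only with an extra additive $2\log k$, not with the stated constant. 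The clean repair uses exactly the fact $\lambda\leq 1$ that you established: for $t \leq 2\log k$ one has $0.18\lambda\sqrt{t\log k} \leq 0.18\sqrt{2}\,\log k \leq 0.26\log k$, hence $k\,e^{-0.18\lambda\sqrt{t\log k}} \geq k^{0.74} \geq 1$, i.e.\ the tail bound of \pref{thm:ident-stoch} holds \emph{vacuously} on the early rounds. With that observation, summing the bound over all $t\geq 1$ --- which is what the paper does without comment --- is fully justified, and the constant $62$ comes out with no boundary term.
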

\begin{proof}
	By \pref{thm:ident-stoch}, we know that the probability of not recommending
	the best model at round $t$ is upper bounded by
	$k\cdot e^{-0.18\lambda \sqrt{T\log k}}$. Using integral approximation, one
	finds that $\sum_{t=1}^\infty e^{-a\sqrt{t}} \leq 2/a^2$ for all $a>0$. This
	gives
	\[
		\ex\brk*{\sum_{t=1}^\infty \indic{\pi_t \neq i^*}} = \sum_{t=1}^\infty
		\Pr\crl{\pi_t \neq i^*} \leq \sum_{t=1}^\infty k\cdot e^{-0.18\lambda
		\sqrt{T\log k}} \leq \frac{62k}{\lambda^2 \log k}.\qedhere
	\]
\end{proof}
Using the lemma, over $T$ rounds, we make at most $\frac{62k}{\lambda^2 \log
k}$ mistakes, for which we get at most $\max_i \Delta_i$ added to the regret,
and in other rounds, we make no mistakes, hence no regrets on those rounds.
Adding up gives the result.

\section{Example for Large Number of Updates}\label{app:example-large-updates}
Consider a binary classification scenario with two models. Set the loss sequence
to be $(1, 0), (0, 1), (1, 0), \ldots$, that is, on the odd rounds the second
model is correct and on the even rounds, the first one. One can see that the probability of
querying the label is always $\dist_{t,1}\dist_{t,2} \vee \eta_t$ for all $t$. Hence,
this probability is always near $1/4$, as the models weights are always around
$1/2$. Hence, the total number of queries is linear.

\section{Experiments}
\label{sec:appendix experiments}

\subsection{Details on the Model Collections}\label{app:dataset-explained}
\begin{itemize}
     \item \underline{\smash{\emph{\cifar}}}: As an image classification dataset, we train 80 models on \cifar\ dataset varying in machine learning models (ranging from DenseNet, Resnet to VGG), architecture and parameter setting.  The ensemble of models have accuracies between 55-92\% on a test set consists of 10\,000 instances. 
     
     \item \underline{\smash{\emph{\imagenet}}}: This dataset consists of 102 image classification models (ranging from ResNet, Inception to MobileNet) pre-trained on \imagenet\ that are available on TensorFlow Hub. The accuracy of models occupy the range in 50-80\%. For each model, we obtain the \imagenet\ validation dataset with 50\,000 data examples, and furthermore normalize and resize them according to expected input format for each model, and finally conduct inference on the given model to produce predicted labels. 
     \end{itemize}
    \begin{table}[h!]
\centering
\caption{Datasets characteristics}
\vspace{-3mm}
\scalebox{0.85}{
\begin{tabular}{l|cccc}
\hline
Dataset & \#Classes & \#Instances & \#Models & Accuracy of Models \\ \hline
{\cifar} & 10 &10\,000  &80 &55-92\%  \\
{\imagenet} & 1\,000 & 50\,000 & 102&50-80\% \\ 
{\drift} & 6 & 3\,000 & 9&25-60\% \\ 
{\emo} & 4 &5\,509  &8 &88-92\% \\
{\cifar\ (worse models)} & 10 & 10\,000 &80 &40-70\% \\ \hline
\end{tabular}
}
\label{tab: datasets}
\end{table}
     \begin{itemize}
     \item \underline{\smash{\emph{\drift}}}: For the \drift\ dataset, we trained models on the gas sensor drift data that is collected over a course of three years. The dataset has ten batches, each collected in different months. We trained an SVM classifier on each of the batch but the last one, and use the last batch of size 3\,000 as test set. Although each model has good training accuracy on the batch it is trained on, namely above 90\%, their accuracy on the test set lies in 25-60\%. This is due to the drift behaviour of sensor data among different time intervals.
     
     \item \underline{\smash{\emph{\emo}}}: This dataset consist of pretrained models that are the development history of a participant on \emo task in SemEval 2019. The task aims to detect emotions from text leveraging contextual information which is deemed challenging due to the lack of facial expressions and voice modulations. We treat each development as an individual pretrained model where development stages differ in various word representations including ELMo~\cite{Peters2018Elmo} and GloVe~\cite{Pennington2014Glove}. The dataset consists of 8 pre-trained models whose accuracy varies between 88-92\% on the test set of size 5\,509.
     
     \item \underline{\smash{\emph{\cifarw}}}: We also train a set of models with relatively lower accuracies on \cifar\ and call it \cifarw. The sole purpose behind creating such a collection is to investigate the performance of $\mpick$ on a practical scenario like this. Using similar model architectures to that of \cifar, the pretrained models have accuracy between 40-70\% on a test set of size 10\,000.
\end{itemize}
\begin{figure*}[h!]
    \centering
      \includegraphics[width=1\linewidth]{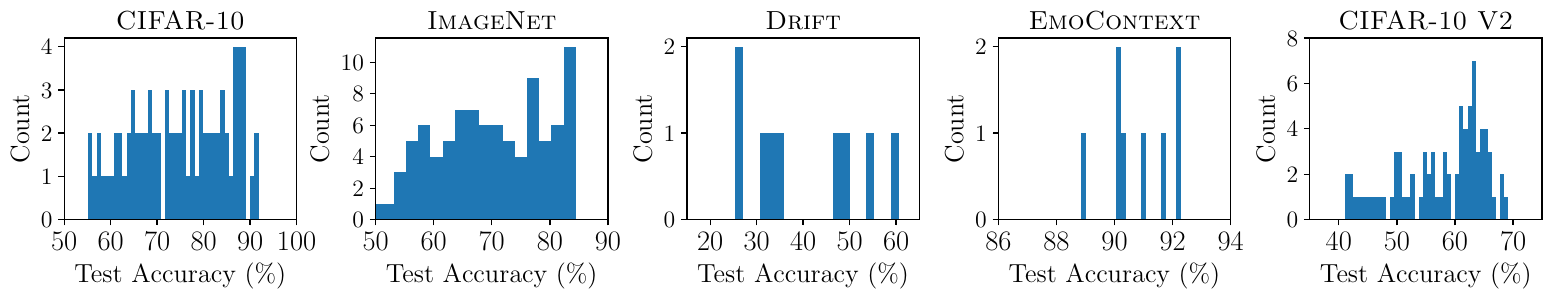}
        \caption{Counts of model accuracies}
        \label{fig:test accuracies}
        \vspace{-1em}
\end{figure*}
The properties of model collections for all datasets are depicted in~\pref{tab: datasets} and~\pref{fig:test accuracies}.

\subsection{Note on the Baselines}
\label{app:baselines}
In this section, we provide further details on some of the adapted baseline methods, namely, query by committee (\entropy), importance weighted active learning (\importance) and efficient active learning (\efal).
\begin{itemize}[leftmargin=*]
   \item \underline{\smash{\emph{Query by Committee}}}: As indicated in
       Section~\ref{sec:experiments}, we adapt the query-by-committee paradigm
       proposed in~\cite{Dagan1995CommitteBasedSamp} for model selection in the
       online setting. The query by committee method consist of two
       sub-strategies (a) ensemble learning, and (b) determining a maximal
       disagreement measure. The ensemble learning indicates how the committee
       is formed from the candidate classifiers. This step is crucial to make
       the disagreement measure more reliable while aiming to form a set of
       classifiers with high accuracy. In literature, there exist many ensemble
       learning methods including~\cite{Mamitsuka1998Boost,
       Melville2004Diverseensembles, Breiman1996Bagging,freund1995desicion}.
       Most, if not all, of these methods are either designed for pool-based
       sampling or for cases where observed data is stored. Bagging
       predictors~\cite{Breiman1996Bagging} proposes to improve performance of a
       single predictor by forming a committee from multiple versions of it,
       where the versions are trained on the bootstrap replicates of training
       data. This is followed by~\cite{Mamitsuka1998Boost} where diverse
       ensembles are generated using bagging and boosting techniques previously
       introduced by~\cite{freund1995desicion}. These strategies focus on a
       setting where the observed data is stored as opposed to our setting.
       Another popular ensemble learning algorithm, \textsc{Active-Decorate}
       relies on the existence of artificial training data to form a diverse set
       of examples. In our setting, however, we assume neither storing of
       previously seen data nor availability of artificial data. In the online
       setting, however, one could benefit from the strategy introduced
       in~\cite{freund1995desicion}. Upon seeing the label $c_t$, the authors
       propose to update the belief on the models such that $\dist_{t}\propto
       \dist_{t-1}\beta^{\loss_t}$. We note that, this update rule very closely resembles that of the structural query by committee, which we include in our numerical analysis. In fact, it is identical when both of $\beta$ are tuned to query budget $b$ amount of label in average over many realizations.

   As a disagreement measure, popular choices include vote margin, vote entropy and KL divergence between the label distributions of each committee member and the consensus in~\cite{Settles2008Sequence}. We first note that the latter two are equivalent for 0-1 loss functions $\ell$. The former, vote margin is measured by the difference between the votes of most voted and second most voted label. We omit this in our analysis motivated by the preliminary observation on the success of entropy over the vote margin.

     \item \underline{\smash{\emph{Importance Weighted Active Learning}}}:
 As indicated earlier, we implement the importance weighted active learning algorithm, introduced by~\cite{Beygelzimer2008ImpWeightedAL}. Formally, upon seeing a new instance $x_t$, the algorithm computes a rejection threshold $\theta_t$ 
     using sample complexity bounds, and update the hypothesis space $\Hc_t$ to contain only the models whose weighted error is $\theta_t$ greater than weighted error of the current best model at time $t$. The sampling probability $q_{t}$ is set to $\max_{i, j \in \mathcal{H}_t, c \in [C]} {\loss}^{(c)}_{t, i} - \loss^{(c)}_{t, j}$. We use 0-1 loss. Therefore, adaptation in our setting becomes making query decision based merely on the disagreement between the \emph{surviving} hypotheses at time $t$. That is, we query the label $c_t$ if and only if the surviving classifiers at time $t$ disagree on the labeling of $x_t$.

     \item \underline{\smash{\emph{Efficient Active Learning}}}: We adapt the efficient active learning algorithm presented by~\cite{Beygelzimer2010Agnostic, Beygelzimer2011EfficientAL}. In a manner similar to the importance weighted approach, the efficient active learning algorithm also uses the importance weighted framework. Upon receiving a new instance $x_t$, the algorithm measures the weighted error estimate between two competing models, and specifies a sampling probability based on a threshold that is a function of $\frac{C_0\log t}{t-1}$ for some parameter $C_0>0$. If the gap between the estimated weighted errors of two competing models are below this threshold, then the label $c_t$ is queried. Otherwise, the algorithm computes the sampling probability $q_t$ that is roughly {\small $\min\big \{1, \mathcal{O}({1}/{G_k^2}+{1}/{G_k})\frac{C_0\log k}{k-1}\big\}$} where {\small $G_k=\min_{i\in[k]}\Loss_{t, i}-\min_{j\in[k], j\neq i}\Loss_{t, j}$}. We refer to Algorithm~1 of~\cite{Beygelzimer2010Agnostic} for further details.
    %
    In our implementation, we consider the threshold parameter $C_0$ as hyperparameter and tune for efficient active learning algorithm to request amount of labels not exceeding the labelling budget $b$. However, as indicated in Figure~\ref{fig:comparison importance-weighted}, it underperforms the importance weighted active learning algorithm. However, it is crucial to emphasize again that these methods are meant to improve supervised training of classifiers instead of ranking of pretrained models. We include them in our comparison for the completeness.
\end{itemize}
\begin{figure*}[h!]
    \centering
      \includegraphics[width=1\linewidth]{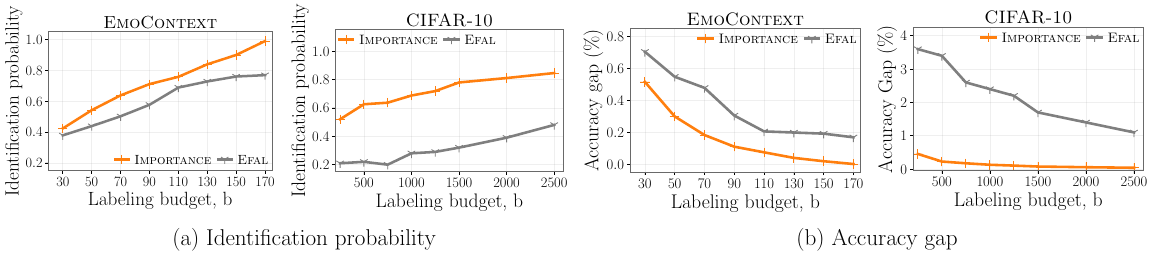}
        \caption{Comparison of importance weighted methods \{\importance, \efal\} and on the \emo\ and \cifar\ datasets}
        \label{fig:comparison importance-weighted}
        \vspace{-1em}
\end{figure*}


\subsection{Performance of \mpick\ on models with low accuracies}\label{app:experiments_detailed}

As mentioned in Section~\ref{sec:experiments}, we conduct another numerical analysis on the performance of $\mpick$ when pretrained models have relatively lower accuracies. Towards that, we train 80 models on \cifar\ varying in machine learning models and parameters. The accuracy of pretrained models line in 40-70$\%$ over a test set of of size 10 000. We compare the model selection methods over this new model collection by following the exact same procedure as in the Section~\ref{sec:experiments}. We use a stream size of 5\,000 and average the results over 500 realizations. Figure~\ref{fig:cifar10 V2 performance} summarize the comparison. When the accuracy of pre-trained models are low, the query by committee algorithm expectedly underperforms as the disagreement measure becomes noisy under the existence of models with low accuracies. $\mpick$, on the other hand, noticeably outperforms in returning the true best model as well as the ranking of the models (Figure~\ref{fig:cifar10 V2 performance}). The regret analysis in Figure~\ref{fig:cifar10 V2 performance} suggests that the structural query by committee method maintains a low regret throughout the streaming process as well as for different labeling budgets, and very closely followed by $\mpick$.
\begin{figure*}[h!]
    \centering
      \includegraphics[width=1\linewidth]{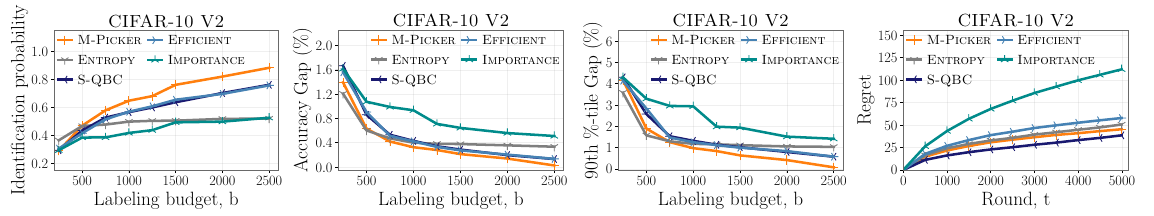}
        \caption{Performance evaluation of model selection methods on \cifarw\ dataset that consist of pre-trained models with low accuracies.}
        \label{fig:cifar10 V2 performance}
        \vspace{-1em}
\end{figure*}

\subsection{Hyperparameters}\label{app:hyperparameters}
The hyperparameter tuning is performed via grid search. For each grid point, we run the experiment for 100 realizations and compute the average number of requests. The grid search was performed over the following search space:
\begin{itemize}
    \item \underline{\smash{\emph{\cifar}}}: \mpick: \texttt{[0, 3\,000]}, \entropy: \texttt{[0, 20]}, \sqbc: \texttt{[0, 10]}, \importance: \texttt{[0, 0.9]}, \efal: \texttt{[0, 1.5e-2]}
    
    \item \underline{\smash{\emph{\imagenet}}}: \mpick: \texttt{[0, 135]}, \entropy: \texttt{[0, 22]}, \sqbc: \texttt{[0, 20]}, \importance: \texttt{[0, 1]}
    
    \item \underline{\smash{\emph{\drift}}}: \mpick: \texttt{[0, 60]}, \entropy: \texttt{[0, 4]}, \sqbc: \texttt{[0, 4]}, \importance: \texttt{[0, 05]}
     
    \item \underline{\smash{\emph{\emo}}}: \mpick: \texttt{[0, 60]}, \entropy: \texttt{[0, 4]}, \sqbc: \texttt{[0, 4]}, \importance: \texttt{[0, 05]}, \efal: \texttt{[0, 1e-2]}
        
    \item \underline{\smash{\emph{\cifarw}}}: \mpick: \texttt{[0, 1\,000]}, \entropy: \texttt{[0, 3]}, \sqbc: \texttt{[0, 10]}, \importance: \texttt{[0, 0.9]}, \efal: \texttt{[0, 1e-1]}

\end{itemize}
with grid size of 250 where grid points are equally spaced. The respective number of requests for each grid point can be found in our publicly available repository\footnote{\url{https://github.com/DS3Lab/online-active-model-selection}}.

Remark that the amount of requests by \mpick\ saturates when \mpick\ reaches at a high identification probability. Therefore, the update probability is upscaled with a very high value such that \mpick\ queries large number of labels, and thus comparison to other methods for large budget constraints are made possible. Practically, this would not be required as \mpick\ itself decides when to stop requesting labels. For example, when the update probability is upscaled by a factor of 11 for \cifarw\ dataset, the number of requests made by \mpick\ is 3\,800 labels, whereas an upscaling of 835 is used to enable \mpick\ requests nearly 4\,800 labels.

\end{document}